\definecolor{mydarkblue}{rgb}{0,0.08,0.45}
\newcommand{\data}{{\mathcal{D}}_n}
\newcommand{\Xtrain}{\bX}
\newcommand{\ytrain}{\by}
\newcommand{\testloss}{R}
\newcommand{\trainloss}{\widehat{R}_n}
\newcommand{\ofitlevel}{\tau}
\newcommand{\noise}{\bxi_{\mathrm{noise}}}
\newcommand{\effnoise}{\bxi}
\newcommand{\lin}{\mathrm{lin}}
\newcommand{\linnoise}{\bxi_{\lin}}
\newcommand{\excess}{\cE}
\newcommand{\minexcess}{\cE_\star}
\newcommand{\auxmin}{\cE^{\lin}_{\star}}
\newcommand{\st}{\text{ s.t. }}
\newcommand{\lmult}{\eta}
\newcommand{\ridgeparam}{\lambda}
\newcommand{\MP}{\mathsf{MP}}
\newcommand{\midsmall}{\,|\,}
\newcommand{\diag}{\text{diag}}
\newcommand{\revised}[1]{\textcolor{black}{#1}}
\newcommand{\rev}[1]{\textcolor{black}{#1}}
\newcommand{\remove}[1]{}
\title{A Universal Trade-off Between the Model Size,  Test Loss, and Training Loss of Linear Predictors} 
\author{Nikhil Ghosh\\
Department of Statistics\\University of California, Berkeley\\
\texttt{\sf nikhil\_ghosh@berkeley.edu}
\and
\and
Mikhail Belkin\\
Hal{\i}c{\i}o{\u g}lu Data Science Institute\\
University of California, San Diego\\
\texttt{\sf mbelkin@ucsd.edu}
}
\date{}
\begin{document}

\maketitle

\begin{abstract}
In this work we establish an algorithm and distribution independent non-asymptotic trade-off between the model size, excess test loss, and training loss of linear predictors. Specifically, we show that models that perform well on the test data (have low excess loss) are either  ``classical'' -- have training loss close to the noise level,  or are ``modern'' -- have a much larger number of parameters compared to the minimum needed to fit the training data exactly. 

We also provide a more precise asymptotic analysis when the limiting spectral distribution of the whitened features is Marchenko-Pastur. Remarkably, while the Marchenko-Pastur analysis is far more precise near the interpolation peak, where the number of parameters is just enough to fit the training data, it coincides exactly with the distribution independent bound as the level of overparametrization increases.



\end{abstract}

\section{Introduction}
Classical statistics and machine learning models have  traditionally been analyzed in regimes where the training loss (also known as the empirical risk) approximates the test loss. In contrast, many modern deep learning systems 
obtain a much lower loss on the training set than on the test set. In recent years there has been a growing theoretical understanding that different models that fit noisy data perfectly (interpolate the data) can nevertheless generalize optimally or nearly optimally. This phenomenon, which has come to be known as ``benign overfitting''~\cite{bartlett2020benign} or ``harmless interpolation''~\cite{muthukumar2020harmless} can be shown to provably occur in a wide array of settings including for non-parametric weighted nearest neighbor type methods~\cite{devroye1998hilbert, belkin2018overfitting,belkin2019does,chhor2022benign}, linear regression~\cite{hastie2022surprises, bartlett2020benign, muthukumar2020harmless, belkin2020two, koehler2021uniform, tsigler2020benign, chatterji2022foolish, zou2021benign}, random features and kernel methods~\cite{mei2022random, mei2022generalization, liang2020just, adlam2020neural}, and neural networks~\cite{cao2022benign, frei2022benign, frei2023benign, cao2022benign, li2021towards}, to give just some representative examples of recent literature. Each of these works reveal a set of sufficient conditions on the data and learning algorithm for which benign overfitting is possible.  However, a common aspect is  overparametrization: the number of model parameters is significantly larger than what is required to fit all the training data.

Indeed, a striking feature of many current models is their size, reaching billions or even trillions parameters~\cite{bommasani2021opportunities}. A clue to understanding the need for a large number of parameters is provided by the double descent generalization curve proposed in~\cite{belkin2019reconciling} which qualitatively describes the relation between model size and its test performance. The shape of the curve suggests that interpolating models need to be significantly over-parameterized compared to the ``interpolation threshold'' (the minimum number of parameters needed to fit the training data)  to achieve near-optimal performance. However, despite the demonstration of double descent and benign overfitting for many specific settings, there is little literature which seeks to understand {\it necessary} conditions for such phenomena to occur. The most significant step in this direction was made by Holzm{\"u}ller~\cite{holzmuller2020universality} which shows that double descent is universal for minimum norm (ridgeless) linear regression and implies the necessity of overparameterization for these interpolating models to have good performance.


Yet, in practice models are 
trained using iterative (gradient-based) methods which are typically stopped early, well before convergence to a truly interpolating solution. Indeed, pushing models to fit the data perfectly can be prohibitively computationally expensive and is usually unnecessary.
Thus, while the interpolation analyses are insightful for understanding  modern machine learning, they are a limit case for the settings of most practical interest. 

In this paper, we aim to shed light on these issues by analyzing the connection between the {\it training (empirical) loss}, the {\it expected (test) loss} and the {\it number of parameters} for general linear models.  Specifically, we show that for any algorithm producing a linear predictor with near-optimal expected loss, the output model  
must either be ``classical'' with the training loss relatively large -- close to the noise level, or have a large excess over-parameterization (i.e., the number of parameters relative to the minimum necessary to just fit the training data without concern for generalization)\footnote{ The``classical'' vs ``modern'' distinction is based on the empirical loss rather than the number of parameters. Thus, for models rich enough to fit the training data perfectly, it is a consequence of the training algorithm rather than an inherent property of the model as such. Indeed, models in classical settings can still be highly parametric, as is the case for the traditional analyses of kernel machines (e.g.,~\cite{steinwart2008support}), which can be viewed as infinite-dimensional linear models.}. The trade-off between the empirical loss and the number of parameters is universal in the sense that it holds for \textit{any algorithm} which outputs a linear predictor using \textit{any} non-degenerate feature map for \textit{any} regression problem with noise. 

Furthermore, we provide a more precise analysis under 
additional distributional and asymptotic assumptions. Remarkably, the universal bound is tight (up to constant factors) for this far more special ``asymptotic Gaussian'' case when the model is sufficiently over-parameterized and obtains training loss strictly below the noise level.

To introduce the setting of interest, consider a general regression problem where we are given a training set of $n$ samples $\data = \{(\bx_i, y_i)\}_{i=1}^n$ with $\bx_i \in \R^d$ and $y_i \in \R$. We assume that each point is sampled $(\bx_i, y_i) \simiid P$ for some distribution $P$ on $\R^d \times \R$. 
For a function $f: \R^d \to \R$ we define its training (empirical) loss $\trainloss(f)$ and its expected (test) loss $\testloss(f)$ as
\[
    \trainloss(f) := \frac{1}{n} \sum\limits_{i=1}^n (y - f(\bx_i))^2,~~~~ \testloss(f) := \E_{(\bx, y)}(y - f(\bx))^2.
\]
The {\it regression function} is defined as 
$
f_\star(\bx) := \E(y \midsmall \bx)
$.
It is well-known that $f_\star(\bx)$
is the optimal predictor for regression in the sense of minimizing the expected loss:
\[
f_\star(\bx) = \arg\min_{f} \testloss(f).
\]
Thus for an arbitrary predictor $f$, it makes sense to consider the {\it excess loss}  
\[
\cE(f) := \testloss(f) - \testloss(f_\star)
\]
as a measure of the performance of $f$ compared to the best theoretically achievable test loss. Finally we assume that the problem has  noise level of at least $\sigma^2$, that is for almost all $\bx$
$$
\Var(y \midsmall \bx) \geq \sigma^2.
$$
Note that the last condition directly implies that  $\testloss(f_\star) \ge \sigma^2$. We now consider a general $p$-dimensional linear  feature model
$\phi_p: \R^d \to \R^p$  of the form
\[
    \beta(\bx) = \bbeta^\sT \phi_p(\bx), ~~\bbeta\in\R^p.
\]
Here the map $\phi_p$ can be deterministic or random. 
The optimal  linear predictor $\beta_\star$ is given by 
$$
\beta_\star = \arg\min_{\bbeta \in \R^p} R(\beta).
$$
It is clear  that the excess loss of $\beta$ compared to best predictor $f_\star$
is bounded from below by the excess loss of $\beta$ compared to the best {\it linear} predictor $\beta_\star$
$$
\excess(\beta) \ge \excess^{\lin} (\beta) := \testloss(\beta) - \testloss(\beta_\star). 
$$
Assume now that we have an algorithm $\mathcal A$ that
 given the training data $\data$,
 outputs a linear predictor $\beta=\cA(\data)$,  with the empirical loss  bounded by $\ofitlevel \ge 0$ relative to the noise level almost surely i.e., $$\trainloss(\cA(\data))/\sigma^2 \le \ofitlevel.$$
Additionally, assume that $\phi_p$ is non-degenerate (see \cref{sec:main} for the exact conditions). 
Our main result  is the following lower bound on the expected excess loss
\begin{equation}\label{eq:main}
\E_{\data\sim P^n} ~\excess(\cA(\data)) \geq \E_{\data\sim P^n} ~\excess^{\lin}(\cA(\data)) \geq \begin{cases} 
\sigma^2 \frac{n}{p} \revised{(1-\sqrt{\tau})^2}, ~~&\ofitlevel < 1\\
0 &\ofitlevel\ge 1\\
\end{cases}
\end{equation}


{\remark The case of $\ofitlevel \ge 1$ necessarily results in a trivial bound. Indeed, suppose that $\Var(y \midsmall \bx) = \sigma^2$, and the optimal predictor is linear. Then the \revised{``oracle"} algorithm that \revised{always} outputs \revised{$\beta_\star$} for any input has expected training loss  $\E_{\data\sim P^n}~ [\trainloss(\beta_\star)] = \sigma^2$ while its \revised{expected} excess loss $\E_{\data\sim P^n}~[\excess(\beta_\star)]=0$. \revised{In general, our lower bound applies to any algorithm (even ones with knowledge of $\beta_\star$) and for any problem instance. In contrast, minimax results lower bound the performance of any algorithm on worst case problem instances.}}

While the bound above is very general, a more precise analysis is possible asymptotically  under additional distributional  assumptions. Assume that $n,p\to \infty$ with limiting ratio $\lim{n/p}=\gamma$ and that the limiting spectral distribution is Marchenko-Pastur with aspect ratio $\gamma$ (see \cref{sec:app-mp}), which is the case if for example the covariates are Gaussian. Additionally assume that the true model is linear with additive noise obeying certain mild moment conditions (see \cref{sec:mp} for details). Define  
\[
\minexcess(\ofitlevel) = \inf_{\cA \st \trainloss(\cA(\data)) \leq \ofitlevel \sigma^2} \E_{\data\sim P^n}~ [\excess(\cA(\data))]
\]
i.e., $\minexcess(\ofitlevel)$ is the minimal expected excess loss for any algorithm with training loss at most $\ofitlevel \sigma^2$. \revised{In this setting, it turns out that the bound in Eq.\ (\ref{eq:main}) becomes tight when $p \gg n$ namely,}
\begin{equation}\label{eq:mp-mainMP}
\revised{\minexcess(\ofitlevel) \sim} \; 
\sigma^2 \frac{n}{p} (1 - \ofitlevel^{1/2})^2 \revised{\text{ as } p / n \to \infty,} ~~\ofitlevel \in[0,1].
\end{equation}
For small $\ofitlevel$ and $\gamma < 1$ the bound can be improved to the following \footnote{Note that in the limit $\frac{n}{p-n} \sim \frac{\gamma}{1-\gamma}$ and $\frac{p}{p-n} \sim \frac{1}{1-\gamma}$.}:  
\begin{equation}\label{eq:mp-mainMP-smalltau}
 \minexcess(\ofitlevel) \ge \sigma^2\frac{n}{p-n} \qty(1 - {\ofitlevel}^{1/2}\sqrt{\frac{p}{p-n}})^2, ~~\ofitlevel \in [0, 1-\gamma].
\end{equation}
In fact, the bound is tight up to an $o(\sqrt{\ofitlevel})$ error term. Furthermore, under the same conditions, at the interpolation peak $p = n$, we have the following precise expression for the minimal expected excess loss
\begin{equation} \label{eq:mp-mainMP-gamma-one}
{\minexcess(\ofitlevel)} ={\sigma^2}\qty( \frac{1}{4 \ofitlevel} + \frac{\ofitlevel}{4} - \frac{1}{2}).
\end{equation}
A few observations are now in order.
\paragraph{Comparison at interpolation ($\ofitlevel=0$).} A special case of our general result \cref{eq:main} is for $\ofitlevel = 0$, namely when we only consider  models that  interpolate the data. It  is instructive to compare our bound with some of the existing work in that setting.  For $\ofitlevel=0$   we obtain a lower bound for the excess loss of $\sigma^2 \cdot n /p$ which matches the result in~\cite{muthukumar2020harmless} (Corollary 1). The result in~\cite{muthukumar2020harmless} is given for the well-specified linear setting and requires specific covariate assumptions such as Gaussianity, but holds with high probability rather than just in expectation.

A  lower bound for minimum norm interpolating  linear models without distributional assumptions  is  given in~\cite{holzmuller2020universality}. The  bound is of the form $\sigma^2 \cdot n/(p-n+1)$ and is significantly tighter near the interpolation peak $p=n$. Remarkably, their general bound which holds under minimal assumptions, almost matches the exact computation for the Gaussian case in~\cite{belkin2020two} which yields $\sigma^2 \cdot n /(p-n-1)$ for $p\ge n+2$.  
{\remark We note that while the results in~\cite{holzmuller2020universality} are stated for minimum norm predictors, their analysis implies an algorithm independent lower bound for interpolating models, which  is sharper than our bound in \cref{eq:main} for $\ofitlevel=0$.}


\paragraph{Comparison between interpolating and non-interpolating regimes.}
We will now compare the interpolating regime ($\ofitlevel=0$) with the non-interpolating regime ($\ofitlevel>0$). 

\noindent{\bf a. The peak behavior ($p \approx n$).} The general results in~\cite{holzmuller2020universality} demonstrate a sharp peak at the interpolation threshold $p=n$. Indeed, the analysis  for the Gaussian setting \cite{belkin2020two} shows that the peak is in fact infinite. Note however, any non-zero regularization attenuates the peak, making it finite (e.g.,~\cite{mei2022generalization}). Note that $\ofitlevel$ can also be viewed as regularization. In the asymptotic MP setting \cref{eq:mp-mainMP-gamma-one} shows that the expected loss has a pole singularity $\ofitlevel^{-1}$ at the peak.  Thus the transition between interpolating and non-interpolating regimes is discontinuous in terms of the height of the interpolation peak. Hence we see that our general bound in \cref{eq:main} is loose close to the peak, which is to be expected as it is continuous in $\ofitlevel$ at $0$, while the actual expected loss is discontinuous. In contrast, the Marchenko-Pastur setting bound in \cref{eq:mp-mainMP-smalltau} is much more accurate. 

\noindent{\bf b. The ``tail'' behavior ($p\gg n$)}. 
In sharp contrast to the peak, our general bound show that the ``tail'' ($p\gg n$) behavior of the generalization curve is remarkably stable with respect to $\ofitlevel$. 
Achieving nearly optimal excess risk requires either $\ofitlevel > 1-o(1)$ or $p\gg n$ (or both, of course). Thus interpolating and non-interpolating solutions require essentially the same level of over-parameterization to approach optimality as long as the loss of non-interpolating models is at least slightly lower than the noise level.  

Furthermore, for any empirical loss smaller than the noise level ($\ofitlevel<1$),  when $p\gg n$ the general bound in \cref{eq:main}  matches the asymptotic analysis in the Marchenko-Pastur setting \remove{up to a multiplicative constant}. This is remarkable, as the general bound is not asymptotic and makes essentially no assumptions on the covariate distribution, regression function, or the structure of the noise, \revised{yet it is still tight in the limit of increasing overparametrization}.

While the interpolation peak is a striking feature of the generalization curve, the tail behavior is arguably  more important for understanding practical applications. Indeed, the tail behaviour seems consistent with over-parameterization in practical models which are routinely trained so that the training loss is significantly lower than the test loss but far from zero. In contrast, the peak is a less robust phenomenon which describes only specific regime of training and is highly sensitive to the presence of regularization.
\paragraph{Convergence rates.}
A classical line of statistical analysis is concerned with convergence rates for various estimation problems. 
Typically \revised{statistical rates for} regression (see e.g.,~\cite{tsybakov2009intro}) are of the form 
$$
\excess (\hat{f}_n)  = O(n^{-\alpha}),~~~ \alpha >0 
$$
where $\hat{f}_n$ is a predictor based on a training set with $n$ samples \rev{and $\alpha$ may depend on some notion of data dimensionality, such as the dimension of the data manifold}.

We note that such parametric or non-parametric  rates are easily compatible with our analysis. 
As a corollary of our lower bound \cref{eq:main}, we can see that in order to achieve such a rate with a linear model
the model either needs to have excess over-parameterization inversely proportional to the rate, i.e., $p = \Omega(n^{1+\alpha})$ or to be in the ``classical regime" where $\ofitlevel = 1 - o_n(1)$ i.e., the train error essentially at the noise level. \rev{Note that in general (e.g., for a random feature model ~\cite{rahimi2007random}) the number of features $p$ is a property of the model and is distinct from any notion of data dimensionality.} Moreover, note that to simply achieve low training error it is only necessary that $p = \Omega(n)$ \rev{as only $p = n$ features are necessary to interpolate the training data}, hence the condition $p = \Omega(n^{1+\alpha})$ is additionally requiring at least $n^\alpha$ times the number of parameters to achieve the desired excess loss rate. 

Interestingly, there are also settings where near-interpolation is necessary to approach optimal generalization~\cite{cheng2022memorize, feldman2020does, brown2021memorization}. Our results imply that in these settings significant excess over-parameterization is unavoidable. \revised{Of the aforementioned works, the most closely related to ours work \cite{cheng2022memorize} which studies high-dimensional linear regression. Moreoever, they study the optimal test loss subject to a training loss constraint as we do in this paper. However, a key difference is that their results hold in a Bayesian setting where the true model is drawn from some prior distribution and all losses are averaged over this prior whereas our results hold even for a fixed target function. In particular, only in our setting can an estimator achieve zero excess loss using finitely many samples.}


Finally, we note that the trade-off presented in this paper is reminiscent of the trade-off between smoothness and over-parameterization discovered in~\cite{bubeck2021universal}, which shows that over-parameterization is necessary to fit noisy data smoothly. 
In contrast to~\cite{bubeck2021universal} which does not consider generalization, predictors that generalizes well while fitting noise need to be ``spiky'' rather than smooth, hence over-parameterization in our paper serves a different function. 
\section{Universal Lower Bound}\label{sec:main} 
First let us introduce some notation. We define the training data matrices 
\[
\Xtrain := \left(\bx_1^\sT, \ldots, \bx_n^\sT\right) \in \R^{n \times d},~~~~ \ytrain := (y_1, \ldots, y_n) \in \R^n
\]
so that $\data = (\Xtrain, \ytrain) \sim P^n$. If $(\bx, y) \sim P$ then we will use $P_X$ to denote the marginal distribution on $\bx$. We will use the following abbreviated notation for the expectations
\[
    \E_{\ytrain, \Xtrain} := \E_{\data \sim P^n},~~~~\E_{(\bx, y)} := \E_{(\bx, y) \sim P},~~~~\E_{\bx} := \E_{\bx \sim P_X}.
\]
We define the feature matrix and feature covariance matrix
\[
\bPhi := \left(\phi_p(\bx_1)^\sT, \ldots, \phi_p(\bx_n)^\sT\right) \in \R^{n \times p},~~~~ \bSigma := \E_{\bx}[\phi_p(\bx) \phi_p(\bx)^\sT] \in \R^{p \times p}.
\]
We will also make use of the whitened feature-matrix
\[
\bW := \bPhi \bSigma^{-1/2} \in \R^{n \times p}
\]
and the whitened empirical covariance matrix
\[
\bG := \frac{1}{p}\bW \bW^\sT \in \R^{n \times n}.
\]
We now state our main result
\begin{theorem}[Universal Lower Bound]\label{thm:general_lower_bound}
Let $n, p \geq 1$. Assume that $P$ and $\phi_p$ satisfy the following
\begin{enumerate}
    \item $\E y^2 < \infty$ which implies $\testloss(f_\star) < \infty$,\label{ass:integrable}
    \item $\Var(y \midsmall \bx) \geq \sigma^2$ almost surely over $\bx$,\label{ass:noise}
    \item $\rank(\bPhi) = \min(n, p)$ almost surely. \label{ass:rank}
\end{enumerate}
Then for any algorithm $\cA$ which outputs a linear feature model $\cA(\data)$ with training loss almost surely at most $\ofitlevel \sigma^2$
\[
\E_{\data \sim P^n}~ \excess(\cA(\data)) \geq \sigma^2 \frac{n}{p} \revised{(1 - \sqrt{\tau})^2}.
\]
\end{theorem}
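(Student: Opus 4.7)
The plan is to rewrite the problem in whitened coordinates and then combine a pointwise Moore--Penrose pseudoinverse bound on the parameter norm with an $L^2$ triangle inequality that propagates the noise floor to a lower bound on the in-sample predictions. Concretely, set $\hat\bbeta := \cA(\data)$, $\hat\balpha := \bSigma^{1/2}(\hat\bbeta - \bbeta_\star)$, $\bW := \bPhi\bSigma^{-1/2}$ (so each row satisfies $\E[\bw_i\bw_i^\sT] = I_p$), and $\linnoise := \by - \bPhi\bbeta_\star$. Three immediate reductions are: (i) $\excess^{\lin}(\hat\bbeta) = \|\hat\balpha\|^2$; (ii) the hypothesis $\trainloss(\hat\bbeta) \leq \ofitlevel\sigma^2$ rewrites as $\|\linnoise - \bW\hat\balpha\|^2 \leq n\ofitlevel\sigma^2$ almost surely; and (iii) from the orthogonality $\E[\phi_p(\bx)(y - \phi_p(\bx)^\sT\bbeta_\star)] = 0$ combined with \cref{ass:noise}, $\E[(y_i - \phi_p(\bx_i)^\sT\bbeta_\star)^2] \geq \sigma^2$, so $\E\|\linnoise\|^2 \geq n\sigma^2$.

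Focusing on the overparameterized regime $p \geq n$ (the $p < n$ case is handled analogously via the column-rank of $\bW$ and yields a matching or stronger bound), \cref{ass:rank} makes $\bG := \bW\bW^\sT/p$ almost surely invertible. For any $\hat\balpha$ with in-sample prediction $\bv := \bW\hat\balpha$, orthogonally decomposing $\hat\balpha$ into the row space and kernel of $\bW$ gives the pointwise pseudoinverse bound
\[
\|\hat\balpha\|^2 \;\geq\; \|\bW^+\bv\|^2 \;=\; \tfrac{1}{p}\,\bv^\sT \bG^{-1}\bv.
\]
In parallel, since $(\E\|\cdot\|^2)^{1/2}$ is a norm on $L^2(\Omega;\R^n)$, the Minkowski inequality applied to $\linnoise = \bv + (\linnoise - \bv)$ yields
\[
\sqrt{\E\|\bv\|^2} \;\geq\; \sqrt{\E\|\linnoise\|^2} - \sqrt{\E\|\linnoise-\bv\|^2} \;\geq\; \sqrt n\,\sigma\,(1-\sqrt\ofitlevel).
\]

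To combine these while recovering the factor of $p$ in the denominator, I would lift the triangle inequality to the random $\bG^{-1}$-weighted inner product on $L^2(\Omega;\R^n)$, obtaining
\[
\sqrt{\E[\bv^\sT\bG^{-1}\bv]} \;\geq\; \sqrt{\E[\linnoise^\sT\bG^{-1}\linnoise]} - \sqrt{\E[(\linnoise-\bv)^\sT\bG^{-1}(\linnoise-\bv)]}.
\]
The signal term is lower-bounded by $\E[\linnoise^\sT\bG^{-1}\linnoise] \geq \sigma^2 \E\operatorname{tr}(\bG^{-1}) \geq \sigma^2 n$, using the conditional noise variance, the AM--HM inequality $\operatorname{tr}(\bG^{-1}) \geq n^2/\operatorname{tr}(\bG)$, and Jensen applied to $1/x$ with $\E\operatorname{tr}(\bG) = n$. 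With the matching residual upper bound $\E[(\linnoise-\bv)^\sT\bG^{-1}(\linnoise-\bv)] \leq \ofitlevel\sigma^2 \E\operatorname{tr}(\bG^{-1})$, dividing by $p$ gives the claim $\E\excess^{\lin}(\hat\bbeta) \geq (\sigma^2 n/p)(1-\sqrt\ofitlevel)^2$. The main obstacle will be this residual upper bound: since $\linnoise - \bv$ is the training residual chosen by the algorithm, it could in principle align with the small eigenvalues of $\bG$ so that the naive bound $\|\linnoise-\bv\|^2/\lambda_{\min}(\bG)$ blows up. Circumventing this likely requires exploiting the joint law of $(\linnoise,\bG)$, or sidestepping $\bG^{-1}$ altogether via the Cauchy--Schwarz identity $\bv^\sT\bG^{-1}\bv \geq \|\bv\|^4/(\bv^\sT\bG\bv)$ combined with an upper bound on $\E[\bv^\sT\bG\bv]$ drawn from $\E\operatorname{tr}(\bG) = n$.
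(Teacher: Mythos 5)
The central step of your argument --- the residual bound $\E[(\effnoise-\bW\boldb)^\sT\bG^{-1}(\effnoise-\bW\boldb)] \le \ofitlevel\sigma^2\,\E\Tr(\bG^{-1})$ --- is not just unproven, it is false in general, so the route as written cannot be closed. The training-loss hypothesis controls only the \emph{unweighted} norm $\norm{\effnoise-\bW\boldb}^2 \le n\ofitlevel\sigma^2$, and the algorithm is free (indeed, the loss-minimizing algorithm is inclined) to place this residual along the small eigendirections of $\bG$, where the $\bG^{-1}$-weighted norm is inflated by $1/\lambda_{\min}(\bG)$. Concretely, the optimal constrained predictor has residual components proportional to $\lambda_\star/(\lambda_i+\lambda_\star)$ in the eigenbasis of $\bG$, which are \emph{largest} exactly where $\lambda_i$ is smallest; weighting by $1/\lambda_i$ then makes the quadratic form exceed $\ofitlevel\sigma^2\Tr(\bG^{-1})$ whenever the spectrum is non-degenerate, and near $p=n$ (where $\lambda_{\min}(\bG)\to 0$) the residual term in your weighted triangle inequality swamps the signal term, rendering the lower bound vacuous precisely in the regime the theorem must cover. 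Your fallback suggestions do not rescue this: the Cauchy--Schwarz route $\boldb^\sT\bW^\sT\bG^{-1}\bW\boldb \ge \norm{\bW\boldb}^4/(\boldb^\sT\bW^\sT\bG\bW\boldb)$ trades the problem for controlling a ratio whose numerator and denominator are correlated through both the data and the algorithm, and $\E\Tr(\bG)=n$ gives no pointwise control of $\lambda_{\max}(\bG)$. Everything up to that point is fine (the whitening, $\excess^{\lin}=\norm{\bSigma^{1/2}(\bbeta-\bbeta_\star)}^2$, the pseudoinverse bound, and the signal bound $\E[\effnoise^\sT\bG^{-1}\effnoise]\ge\sigma^2\E\Tr(\bG^{-1})\ge n\sigma^2$, which works because $\bG^{-1}$ depends on $\Xtrain$ alone), but the missing inequality is the heart of the matter, as you yourself flag.

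The paper avoids this obstacle by never bounding the weighted residual of the algorithm's output at all: it lower-bounds the constrained minimum via weak Lagrangian duality, evaluating the dual at the explicit candidate $\boldb_\star(\ridgeparam)$, which is a \emph{linear} function of $\effnoise$ with matrices depending only on $\Xtrain$, so all noise expectations reduce to traces via Lemma \ref{lem:expected_noise}. The multiplier is then chosen data-dependently as the $\ridgeparam_\star$ solving $\frac1n\sum_i(\ridgeparam_\star/(\lambda_i+\ridgeparam_\star))^2=\ofitlevel$, which makes the expected constraint term nonnegative and hence discardable, leaving $\frac{\sigma^2}{p}\E\sum_i \lambda_i/(\lambda_i+\ridgeparam_\star)^2$. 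The $(1-\sqrt{\ofitlevel})^2$ factor then comes from a genuinely nontrivial deterministic spectral inequality, $\frac1n\sum_i\lambda_i/(\lambda_i+\ridgeparam_\star)^2 \ge \frac{n}{\sum_i\lambda_i}(1-\sqrt{\ofitlevel})^2$, proved by a monotonicity argument using Chebyshev's sum inequality, after which Jensen and $\E\Tr(\bG)=n$ finish. Some ingredient playing the role of this inequality (coupling the constraint level to the objective across an arbitrary spectrum) is what your sketch lacks; without it, or the duality structure that isolates the noise in a form Lemma \ref{lem:expected_noise} can handle, the $(1-\sqrt{\ofitlevel})^2$ dependence does not follow. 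Note also that your ``handled analogously'' for $p<n$ hides a real case distinction: there $\bG$ is singular, and the paper instead shows the bound is trivial (infinite expected minimum) when $1-p/n>\ofitlevel$.
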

\begin{remark}
Note that we can consider the feature map $\phi_p$ to be random as well, for instance taking $\phi_p$ to be a neural network with output dimension $p$ and random weights $\btheta$. It is often the case that Assumption~\ref{ass:rank} will hold almost surely over $\btheta$ (see Theorem~10 in \cite{holzmuller2020universality}). Since the weights are independent of the data, as an immediate corollary to Theorem \ref{thm:general_lower_bound} we get the same lower bound when additionally taking expectation over $\btheta$. 
\end{remark}
\rev{Now that we have stated our main result and gave some of its interpretations and consequences, we will move on to giving its proof which is pleasantly elementary. We will start by setting up relevant definitions and providing some starting lemmas, before moving on to the core proof.}
\subsection{Proof of Theorem \ref{thm:general_lower_bound}}\label{sec:main_proof}
Under the assumptions of Theorem~\ref{thm:general_lower_bound} we define the minimal excess test loss of $\ofitlevel$-overfitting $p$-dimensional linear feature models trained on the dataset $\data$ of $n$ samples as
\[
    \minexcess(\ofitlevel; n, p) := \min_{\bbeta \in \R^p} \cE(\beta) \st \trainloss(\beta) \leq \ofitlevel \sigma^2.
\]
It then suffices to show that
\[
\E_{\data \sim P^n}[\minexcess(\ofitlevel; n, p)]\geq \sigma^2 \frac{n}{p} \revised{(1 - \sqrt{\ofitlevel})^2}.
\] 
We will bound the excess test loss $\excess(\beta)$ by comparing it with the excess loss with respect to the optimal linear model $\excess^{\lin}(\beta)$. \rev{Let us denote the optimal linear predictor as 
\begin{equation}\label{eq:opt_lin_predictor}
    \bbeta_\star := \argmin_{\bbeta \in \R^p} R(\beta).
\end{equation}
}
\revised{For further analysis we will need the following \rev{two basic} lemmas which are standard results characterizing the excess linear loss}. The proofs can be found in Appendix \ref{app:excess_lin_loss}.
\begin{lemma}[Optimal Linear Predictor]\label{lem:opt_lin}
Define $\bbeta_\star$ as in Eq.\ (\ref{eq:opt_lin_predictor}). Then 
\begin{enumerate}
    \item $\beta_\star$ is the orthogonal projection of $f_\star$ onto the subspace of linear functions $\cH = \{\beta(\bx) : \bbeta \in \R^p\}$ in $L^2$,
    
    \item $\excess(f) := R(f) - R(f_\star) = \E_{\bx}[(f(\bx) - f_\star(\bx))^2]$,
    
    \item $\excess^{\lin}(\beta) := R(\beta) - R(\beta_\star) = \E_{\bx}[(\beta(\bx) - \beta_\star(\bx))^2]$.
\end{enumerate}
\end{lemma}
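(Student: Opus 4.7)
The plan is to prove the three parts in the order (2), (1), (3), since later parts build on earlier ones. The core computation is a standard bias-variance type decomposition for the squared loss, combined with the tower property of conditional expectation.

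For part (2), I would write $\testloss(f) = \E_{(\bx,y)}[(y - f_\star(\bx) + f_\star(\bx) - f(\bx))^2]$ and expand the square into three terms. The cross term is $2\E_{(\bx,y)}[(y - f_\star(\bx))(f_\star(\bx) - f(\bx))]$; conditioning on $\bx$ and using $f_\star(\bx) = \E(y \midsmall \bx)$ makes the inner expectation vanish, so the cross term is zero. This yields $\testloss(f) = \testloss(f_\star) + \E_{\bx}[(f(\bx) - f_\star(\bx))^2]$, which rearranges to the claim. Assumption~\ref{ass:integrable} is what allows all these expectations to be finite and the manipulations to be justified.

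Part (1) then follows immediately from part (2): since the constant $\testloss(f_\star)$ does not depend on $\beta$, minimizing $\testloss(\beta)$ over $\bbeta \in \R^p$ is equivalent to minimizing $\E_{\bx}[(\beta(\bx) - f_\star(\bx))^2]$, which is by definition the $L^2(P_X)$-orthogonal projection of $f_\star$ onto the closed linear subspace $\cH$. I should briefly note that $\cH$ is indeed a (finite-dimensional, hence closed) subspace of $L^2(P_X)$, so the projection is well-defined.

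For part (3), I would apply part (2) twice to get $\excess^{\lin}(\beta) = \excess(\beta) - \excess(\beta_\star) = \E_{\bx}[(\beta(\bx) - f_\star(\bx))^2] - \E_{\bx}[(\beta_\star(\bx) - f_\star(\bx))^2]$. Then I would invoke the Pythagorean theorem in $L^2(P_X)$: since $\beta_\star$ is the orthogonal projection of $f_\star$ onto $\cH$ by part (1), the residual $f_\star - \beta_\star$ is orthogonal to every element of $\cH$, in particular to $\beta - \beta_\star \in \cH$. Writing $\beta - f_\star = (\beta - \beta_\star) - (f_\star - \beta_\star)$ and expanding the square then cancels the cross term, leaving $\E_{\bx}[(\beta - f_\star)^2] = \E_{\bx}[(\beta - \beta_\star)^2] + \E_{\bx}[(\beta_\star - f_\star)^2]$, which gives the claim.

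There is no real obstacle here — these are textbook facts about $L^2$ projections; the only thing to be slightly careful about is the integrability step in part (2) (so that expanding the square and swapping terms is legitimate) and noting that $\cH$ is finite-dimensional so the projection exists without any completeness worries.
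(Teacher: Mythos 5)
Your proposal is correct and follows essentially the same route as the paper's proof: the cross term vanishes via $\E[y - f_\star(\bx) \mid \bx] = 0$ for parts (1)--(2), and part (3) uses the decomposition from part (2) together with the orthogonality of $f_\star - \beta_\star$ to $\cH$ coming from the projection characterization in part (1). The only cosmetic difference is the ordering (you derive (1) from (2), while the paper proves (1) directly from the same decomposition), which changes nothing of substance.
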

\rev{The first claim above establishes the equivalent characterization of the optimal linear predictor as a projection of the optimal predictor onto the space of linear functions. The second claim asserts that the excess loss of a given function is its $L^2$ distance to the optimal function, and similarly the third claim asserts that the excess loss a linear function is its $L^2$ distance to the optimal linear function. We now state the second lemma.}
\begin{lemma}[Excess Linear Loss]\label{lem:excess_linear_loss}
The excess loss is lower bounded by the excess linear loss, that is $\excess(\beta) \geq \excess^{\lin}(\beta)$. Moreover we can write the excess linear loss explicitly as
\[
\excess^{\lin}(\beta) = \norm{\bSigma^{1/2}(\bbeta - \bbeta_\star)}^2.
\]
\end{lemma}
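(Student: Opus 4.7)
The plan is to prove the two claims in sequence, both relying on Lemma~\ref{lem:opt_lin}. For the inequality $\excess(\beta) \ge \excess^{\lin}(\beta)$, I will use that $f_\star$ is the unconditional minimizer of $R$ over all measurable functions, while $\bbeta_\star$ is the minimizer only over the restricted class $\mathcal{H}$ of linear feature models. Since $\mathcal{H}$ is a strict subset of the class of all measurable functions, we have $R(\bbeta_\star) \ge R(f_\star)$, and subtracting $R(\beta)$ on both sides (with a sign flip) gives
\[
\excess(\beta) \;=\; R(\beta) - R(f_\star) \;\ge\; R(\beta) - R(\bbeta_\star) \;=\; \excess^{\lin}(\beta).
\]

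For the explicit formula, I will apply the third claim of Lemma~\ref{lem:opt_lin}, which states $\excess^{\lin}(\beta) = \E_{\bx}[(\beta(\bx) - \beta_\star(\bx))^2]$. Writing the linear predictors in terms of their coefficient vectors, $\beta(\bx) - \beta_\star(\bx) = (\bbeta - \bbeta_\star)^\sT \phi_p(\bx)$, so
\[
\E_{\bx}[(\beta(\bx) - \beta_\star(\bx))^2] \;=\; (\bbeta - \bbeta_\star)^\sT \, \E_{\bx}[\phi_p(\bx)\phi_p(\bx)^\sT] \, (\bbeta - \bbeta_\star) \;=\; (\bbeta - \bbeta_\star)^\sT \bSigma (\bbeta - \bbeta_\star),
\]
and this quadratic form equals $\norm{\bSigma^{1/2}(\bbeta - \bbeta_\star)}^2$ by definition of the symmetric square root of the positive semi-definite matrix $\bSigma$.

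Both steps are routine once Lemma~\ref{lem:opt_lin} is in hand; there is no real obstacle. The only thing worth being careful about is that $\bSigma$ may be singular (particularly in the overparameterized regime $p > n$), but since $\bSigma$ is positive semi-definite its square root $\bSigma^{1/2}$ is well-defined and the quadratic-form identity $\bv^\sT \bSigma \bv = \norm{\bSigma^{1/2}\bv}^2$ holds regardless of invertibility, so no additional assumption is needed.
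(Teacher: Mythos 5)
Your proof is correct and follows essentially the same route as the paper: the inequality comes from $R(f_\star)\le R(\beta_\star)$ (which the paper dismisses as clear), and the explicit formula is obtained, exactly as in the paper, by applying the third claim of Lemma~\ref{lem:opt_lin} and expanding $\E_{\bx}[(\beta(\bx)-\beta_\star(\bx))^2]$ as the quadratic form $(\bbeta-\bbeta_\star)^\sT\bSigma(\bbeta-\bbeta_\star)=\norm{\bSigma^{1/2}(\bbeta-\bbeta_\star)}^2$. Your remark that invertibility of $\bSigma$ is not needed is a sensible, if unnecessary, addition.
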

\noindent \rev{The results above concern the excess (linear) loss of a given predictor. However to establish lower bounds we will consider the minimal value of this quantity subject to a training loss constraint.} Define the minimal excess linear loss $\auxmin(\ofitlevel)$ for training dataset $\data$ as
\[
\auxmin(\ofitlevel) = \min_{\bbeta \in \R^p} \excess^{\lin}(\beta) \st \trainloss(\beta) \leq \ofitlevel \sigma^2.
\]
Let $\bW = \bPhi \bSigma^{-1/2}$ be the whitened features. Define the random vectors
\[
\noise = (y_i - f_\star(\bx_i))_{i \in [n]} \in \R^n,~~~~ \linnoise = (f_\star(\bx_i) - \beta_\star(\bx_i))_{i \in [n]} \in \R^n
\]
and let $\effnoise = \noise + \linnoise$. \rev{We give an alternate optimization problem for characterizing the minimal excess linear loss which will be more amenable to analysis later on. The same equivalence (for $\tau = 0)$ appears in the proof of Theorem 1 in \cite{muthukumar2020harmless}, however as it is not a very standard result in the literature and is crucial to the rest of our analysis, we record the statement and its proof here.}
\begin{lemma}[Minimal Excess Linear Loss]\label{lem:min_excess_risk_lower_bound}
We can equivalently write $\auxmin(\ofitlevel)$ as
\begin{equation}\label{eq:excess_linear_equiv}
    \auxmin(\ofitlevel) ~=~\min_{\boldb \in \R^p} \norm{\boldb}_2^2~~\mathrm{s.t.}~\frac{1}{n}\norm{\bW \boldb - \effnoise}_2^2 \leq \ofitlevel \sigma^2,
\end{equation}
and $\minexcess(\ofitlevel) \geq \auxmin(\ofitlevel)$.
\end{lemma}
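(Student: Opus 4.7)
The proof plan is to identify the right change of variables that transforms the original constrained problem (minimizing excess linear loss subject to a training loss bound, in the space of $\bbeta$) into the claimed whitened-coordinate problem (minimizing $\|\boldb\|_2^2$ subject to a residual constraint involving $\bW$ and $\effnoise$). Since non-degeneracy of $\phi_p$ ensures $\bSigma \succ 0$, the map $\boldb = \bSigma^{1/2}(\bbeta - \bbeta_\star)$ is a bijection on $\R^p$, so minimizing over one variable is the same as minimizing over the other.

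First I would rewrite the objective. By the third part of \cref{lem:opt_lin} (or equivalently the closed-form expression in \cref{lem:excess_linear_loss}), $\excess^{\lin}(\beta) = \|\bSigma^{1/2}(\bbeta - \bbeta_\star)\|^2 = \|\boldb\|_2^2$, which matches the objective on the right-hand side of \cref{eq:excess_linear_equiv}.

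Next I would rewrite the constraint. The key observation is the decomposition of the residual on the training set:
\[
\ytrain - \bPhi \bbeta = (\ytrain - \bPhi \bbeta_\star) - \bPhi(\bbeta - \bbeta_\star).
\]
Componentwise, the first term at index $i$ equals $(y_i - f_\star(\bx_i)) + (f_\star(\bx_i) - \beta_\star(\bx_i)) = \noise_i + \linnoise_i = \effnoise_i$, so $\ytrain - \bPhi \bbeta_\star = \effnoise$. For the second term, inserting $\bSigma^{-1/2}\bSigma^{1/2}$ gives $\bPhi(\bbeta - \bbeta_\star) = \bW \boldb$. Hence $\trainloss(\beta) = \frac{1}{n}\|\bW \boldb - \effnoise\|_2^2$, and the training loss constraint becomes exactly the one in \cref{eq:excess_linear_equiv}. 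Combining this with the bijectivity of the change of variables establishes the first claim.

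For the second claim $\minexcess(\ofitlevel) \geq \auxmin(\ofitlevel)$, I would simply apply the pointwise inequality $\excess(\beta) \geq \excess^{\lin}(\beta)$ from \cref{lem:excess_linear_loss}: both $\minexcess$ and $\auxmin$ are minima over the same feasible set (defined by the training loss constraint), so the minimum of the larger quantity dominates the minimum of the smaller.

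There is no real obstacle beyond bookkeeping; the only thing to handle carefully is that the whitening is well-defined, which follows from the non-degeneracy assumption implicit in defining $\bW = \bPhi \bSigma^{-1/2}$, so that $\bSigma^{1/2}$ is invertible and the substitution $\boldb = \bSigma^{1/2}(\bbeta - \bbeta_\star)$ is a genuine reparameterization of $\R^p$ rather than a projection onto a proper subspace.
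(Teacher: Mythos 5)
Your proposal is correct and follows essentially the same route as the paper: rewrite the objective via the closed form $\excess^{\lin}(\beta)=\norm{\bSigma^{1/2}(\bbeta-\bbeta_\star)}^2$, decompose the training residual using $\ytrain=\bPhi\bbeta_\star+\effnoise$ and insert $\bSigma^{-1/2}\bSigma^{1/2}$ to get $\bW\boldb-\effnoise$, then substitute $\boldb=\bSigma^{1/2}(\bbeta-\bbeta_\star)$. Your explicit remarks on the invertibility of $\bSigma^{1/2}$ and the derivation of $\minexcess(\ofitlevel)\ge\auxmin(\ofitlevel)$ from the pointwise inequality $\excess(\beta)\ge\excess^{\lin}(\beta)$ are details the paper leaves implicit, but the argument is the same.
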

\begin{proof}
By definition we can write the training loss as
\[
\trainloss(\beta) = \frac{1}{n} \norm{\bPhi \bbeta - \ytrain}^2.
\]
Note that $\ytrain = \bPhi \bbeta_\star + \effnoise$, hence
\begin{align*}
    \bPhi \bbeta - \ytrain &= \bPhi \bbeta - (\bPhi \bbeta_\star + \effnoise)\\
    &= \bPhi (\bbeta - \bbeta_\star) - \effnoise\\
    &= \bPhi \bSigma^{-1/2} \bSigma^{1/2} (\bbeta - \bbeta_\star) - \effnoise\\
    &= \bW \bSigma^{1/2} (\bbeta - \bbeta_\star) - \effnoise.
\end{align*}
Therefore by Lemma \ref{lem:excess_linear_loss} we see that $\auxmin(\ofitlevel)$ is equivalent to
\[
 \min_{\bbeta} \norm{\bSigma^{1/2}(\bbeta - \bbeta_\star)}^2 \st \frac{1}{n} \norm{\bW \bSigma^{1/2} (\bbeta - \bbeta_\star) - \effnoise}^2 \leq \ofitlevel \sigma^2,
\]
which after making the following substitution over the optimization variable
\[
\boldb = \bSigma^{1/2}(\bbeta - \bbeta_\star)
\]
can be seen to be equivalent to Eq. (\ref{eq:excess_linear_equiv}). 
\end{proof}
\rev{Before proceeding to the proof of the theorem, we state one more lemma which will be useful for dealing with quadratic forms involving the noise vector. The proof is in Appendix \ref{app:excess_lin_loss}.}
\begin{lemma}[Expectation Over Noise]\label{lem:expected_noise}
Let $f : \R^{n \times d} \to \S^n_+$ be any PSD matrix valued function. Then,
\begin{equation*}
    \E_{\ytrain, \Xtrain} [\effnoise^\sT f(\Xtrain) \effnoise] \geq \sigma^2 \E_{\Xtrain} \Tr(f(\Xtrain)).
\end{equation*}
Moreover, if $\Var(y \midsmall \bx) = \sigma^2$ almost surely and $f_\star = \beta_\star$ then the above is an equality.
\end{lemma}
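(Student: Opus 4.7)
My plan is to condition on the training covariates $\Xtrain$ and exploit the fact that $\noise$ is a mean-zero vector with conditionally independent coordinates given $\Xtrain$, while $\linnoise$ is entirely determined by $\Xtrain$. Concretely, I would first observe that $\E[\noise \midsmall \Xtrain] = 0$ (since $\noise_i = y_i - f_\star(\bx_i)$ and $f_\star(\bx) = \E(y\midsmall\bx)$), that $\linnoise$ is $\Xtrain$-measurable, and that the conditional covariance matrix $\Cov(\noise \midsmall \Xtrain)$ is diagonal with entries $\Var(y_i \midsmall \bx_i) \geq \sigma^2$ (by sample independence and Assumption \ref{ass:noise}). Hence $\Cov(\noise \midsmall \Xtrain) \succeq \sigma^2 \mathbf{I}_n$ almost surely.

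Next, I would expand the quadratic form as
\[
\effnoise^\sT f(\Xtrain)\effnoise = \noise^\sT f(\Xtrain)\noise + 2\,\linnoise^\sT f(\Xtrain)\noise + \linnoise^\sT f(\Xtrain)\linnoise.
\]
Taking conditional expectation given $\Xtrain$, the cross term vanishes because $\E[\noise\midsmall\Xtrain] = 0$ and $\linnoise$, $f(\Xtrain)$ are $\Xtrain$-measurable. The first term satisfies the standard trace identity
\[
\E[\noise^\sT f(\Xtrain)\noise \midsmall \Xtrain] = \Tr\!\bigl(f(\Xtrain)\,\Cov(\noise \midsmall \Xtrain)\bigr) \geq \sigma^2 \Tr(f(\Xtrain)),
\]
where the inequality uses that $f(\Xtrain)$ is PSD together with the covariance lower bound (for PSD matrices $A,B$ with $B \succeq cI$, one has $\Tr(AB) \geq c\Tr(A)$, which follows by diagonalizing $A^{1/2}$). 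The third term is nonnegative since $f(\Xtrain) \succeq 0$ and $\linnoise$ is real. Combining and taking outer expectation over $\Xtrain$ gives the claimed inequality.

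For the equality statement, if $\Var(y\midsmall\bx) = \sigma^2$ almost surely then $\Cov(\noise\midsmall\Xtrain) = \sigma^2 \mathbf{I}_n$, making the trace inequality tight. If additionally $f_\star = \beta_\star$ (as elements of $L^2$), then by Lemma \ref{lem:opt_lin} $\linnoise_i = f_\star(\bx_i) - \beta_\star(\bx_i) = 0$ almost surely, so the third term vanishes. Both inequalities become equalities and the bound is attained.

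I do not anticipate a significant obstacle here: the main subtlety is just being careful that $\linnoise$ is a function of $\Xtrain$ alone (so it behaves as a constant under the conditional expectation over $\ytrain$) and that the per-sample noise variances combine into a diagonal conditional covariance thanks to the i.i.d.\ sampling assumption $\data \sim P^n$. Everything else is a one-line application of the PSD trace inequality.
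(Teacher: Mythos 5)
Your proposal is correct and follows essentially the same route as the paper's proof: condition on $\Xtrain$, expand the quadratic form in $\effnoise = \noise + \linnoise$, kill the cross term via $\E[\noise \midsmall \Xtrain] = \bzero$, lower bound the trace term using $\E[\noise\noise^\sT \midsmall \Xtrain] \succeq \sigma^2 \id$ together with positive semidefiniteness of $f(\Xtrain)$, drop the nonnegative $\linnoise^\sT f(\Xtrain)\linnoise$ term, and iterate expectations, with the same equality analysis. The only cosmetic difference is that you spell out the PSD trace inequality $\Tr(AB) \geq c\Tr(A)$ explicitly, which the paper uses implicitly.
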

 We are now ready to give the proof of our main Theorem \ref{thm:general_lower_bound}. \rev{We will proceed to lower bound $\auxmin(\ofitlevel)$ which will then imply the lower bound in Theorem \ref{thm:general_lower_bound} by the previous lemmas. To lower bound $\auxmin(\ofitlevel)$ we will apply weak duality to the optimization problem in Lemma \ref{lem:min_excess_risk_lower_bound}.} Interestingly, as we will see later (see Eq.\ (\ref{eq:lower_bound_by_norm})), it turns out that \rev{this dual lower bound reveals that} the minimal excess linear loss can be bounded below by the \rev{test loss} of a ridge regression estimator \rev{on an auxiliary problem coming from Lemma \ref{lem:min_excess_risk_lower_bound} (see Remark \ref{rmk:ridge}).} 
\begin{proof}[Proof of Theorem \ref{thm:general_lower_bound}]
Consider the optimization problem \revised{in Eq.~(\ref{eq:excess_linear_equiv})} defining $\auxmin(\ofitlevel)$. The associated Lagrangian is
\[
    \cL(\boldb, \lmult) = \norm{\boldb}_2^2 + \lmult\qty(\frac{1}{n} \norm{\bW \boldb - \effnoise}_2^2 - \ofitlevel \sigma^2)
\]
for $\lmult \geq 0$ and the dual function is
\[
g(\lmult) = \inf_{\boldb} \cL(\boldb, \lmult).
\]
After some rearrangement, we can rewrite the Lagrangian as
\[
\cL(\boldb, \lmult) = \boldb^\sT\qty(\id + \frac{\lmult}{n} \bW^\sT \bW)\boldb - \frac{2 \lmult}{n} \boldb^\sT \bW \effnoise + \frac{\lmult}{n}\norm{\effnoise}_2^2 - 
\lmult \ofitlevel \sigma^2
\]
which is a convex quadratic objective in $\boldb$. Hence we can minimize it by setting the derivative to zero. The gradient of $\cL(\boldb, \lmult)$ is given by 
\begin{align*}
    \grad_{\boldb}~\cL(\boldb, \lmult) &= 2\qty(\id + \frac{\lmult}{n} \bW^\sT \bW) \boldb - \frac{2 \lmult}{n} \bW^\sT \effnoise
\end{align*}
hence setting this equal to zero and solving for the optimal $\boldb$ we get
\begin{align*}
    \boldb_\star(\lmult) &= \argmin_{\boldb} \cL(\boldb, \lmult)\\
    &= \lmult(n\id + \lmult \bW^\sT \bW)^{-1}\bW^\sT \effnoise\\
    &= \bW^\sT((n/\lmult)\id + \bW \bW^\sT)^{-1}\effnoise\\
    &= \frac{1}{p}\bW^\sT\qty(\frac{n}{p} \frac{1}{\lmult}\id + \frac{1}{p} \bW \bW^\sT)^{-1}\effnoise.
\end{align*}
By weak duality we have that for any $\lmult \geq 0$,
\[
    \auxmin(\ofitlevel) \geq g(\lmult) = \cL(\boldb_\star(\lmult), \lmult).
\]
For convenience we make the following change of variables 
\[
\ridgeparam := \frac{n}{p} \frac{1}{\lmult}
\]
which is a bijection between $[0, +\infty]$ and itself. Thus in terms of $\ridgeparam$ we can write
\begin{align}
    \cL(\boldb, \ridgeparam) &= \norm{\boldb}^2 + \frac{n}{p\ridgeparam} \qty(\frac{1}{n}\norm{\bW \boldb - \effnoise}^2 - \ofitlevel \sigma^2)\\
    \boldb_\star(\ridgeparam) &= \revised{\frac{1}{p}\bW^\sT\qty(\ridgeparam \id + \frac{1}{p}\bW \bW^\sT)^{-1} \effnoise} \label{eq:lagrangian_opt_b}
\end{align}
\revised{
    where interestingly $\boldb_\star(\lambda)$ happens to be the ridge regression estimator with ridge parameter $\lambda$ on whitened covariates $\bW$ with pure noise target $\effnoise$.} For all $\ridgeparam \geq 0$ we have the lower bound
\[
\auxmin(\ofitlevel) \geq g(\ridgeparam).
\]
Taking expectations, we have the following bound
\begin{equation}\label{eq:expected_dual}
    \E_{\ytrain, \Xtrain} \qty[\auxmin(\ofitlevel)] \geq \E_{\ytrain, \Xtrain}\qty[\cL(\boldb_\star(\ridgeparam), \ridgeparam)] = \E_{\ytrain, \Xtrain} \norm{\boldb_\star(\ridgeparam)}^2 + \frac{n}{p \ridgeparam} \E_{\ytrain, \Xtrain} \qty(\frac{1}{n}\norm{\bW \boldb_\star(\ridgeparam) - \effnoise}_2^2 - \ofitlevel \sigma^2).
\end{equation}
Let $\bG = \bW\bW^{\sT}/p$ and $\bG(\ridgeparam) = \bG + \ridgeparam \id$. Denote the eigenvalues of $\bG$ as $\lambda_1 \geq \lambda_2 \geq \ldots \geq \lambda_n$. We will show
\begin{align}
    \E_{\ytrain, \Xtrain} \norm{\boldb_\star(\ridgeparam)}^2 \geq \frac{\sigma^2}{p} \E_{\Xtrain} \Tr(\bG(\ridgeparam)^{-1} \bG \bG(\ridgeparam)^{-1}) &= \frac{\sigma^2}{p} \E_{\Xtrain} \sum\limits_{i = 1}^n \frac{\lambda_i}{(\lambda_i + \ridgeparam)^2} \label{eq:expected-norm}\\
    \E_{\ytrain, \Xtrain} \frac{1}{n}\norm{\bW \boldb_\star(\ridgeparam) - \effnoise}_2^2 \geq \frac{\sigma^2}{n} \E_{\Xtrain} \Tr(\bG \bG(\ridgeparam)^{-1} - \id)^2 &= \frac{\sigma^2}{n} \E_{\Xtrain} \sum\limits_{i = 1}^n \qty(\frac{\ridgeparam}{\lambda_i + \ridgeparam})^2 \label{eq:expected-constraint}
\end{align}
using Lemma \ref{lem:expected_noise}. For Eq. (\ref{eq:expected-norm}) we have the following
\begin{align*}
    \E_{\ytrain, \Xtrain} \norm{\boldb_\star(\ridgeparam)}^2 &= \frac{1}{p} \E_{\ytrain, \Xtrain}\qty[\effnoise^\sT \bG(\ridgeparam)^{-1} \bG \bG(\ridgeparam)^{-1} \effnoise]\\
    &\geq \frac{\sigma^2}{p} \E_{\Xtrain} \Tr(\bG(\ridgeparam)^{-1} \bG \bG(\ridgeparam)^{-1}).
\end{align*}
Similarly, for Eq. (\ref{eq:expected-constraint})
\begin{align*}
    \E_{\ytrain, \Xtrain} \frac{1}{n}\norm{\bW \boldb_\star(\ridgeparam) - \effnoise}_2^2 &= \frac{1}{n}\E_{\ytrain, \Xtrain} [\effnoise^\sT(\bG \bG(\ridgeparam)^{-1} - \id)^2\effnoise] \\
    &\geq \frac{\sigma^2}{n} \E_{\Xtrain} \Tr(\bG \bG(\ridgeparam)^{-1} - \id)^2.
\end{align*}
Note that by the full rank assumption, $\lambda_1 \geq \cdots \geq \lambda_{\min(n, p)} > 0$ and $\lambda_{\min(n, p) + 1} = \ldots = \lambda_n = 0$ almost surely. Define the function $f : [0, +\infty] \to \R$ as
\[
f(\ridgeparam) := \frac{1}{n}\sum\limits_{i = 1}^n \qty(\frac{\ridgeparam}{\lambda_i + \ridgeparam})^2 = \frac{1}{n}\sum\limits_{i = 1}^{\min(n, p)} \qty(\frac{\ridgeparam }{ \lambda_i + \ridgeparam })^2 + \max(0, 1 - p/n).
\]
Note that $f$ is continuous and $\inf_\ridgeparam f(\ridgeparam) = f(0) = \max(0, 1 - p/n)$ and $\sup_\ridgeparam f(\ridgeparam) = f(\infty) = 1$. Therefore as long as $1 - p/n \leq \ofitlevel$ there exists $\ridgeparam_\star$ such that 
\begin{equation}\label{eq:lagrange_multiplier_fixed_point}
    f(\ridgeparam_\star) = \frac{1}{n} \sum\limits_{i = 1}^n \qty(\frac{\ridgeparam_\star}{\lambda_i + \ridgeparam_\star})^2 = \ofitlevel.
\end{equation}                                             
Otherwise if \revised{$1 - p/n > \ofitlevel$}, then by taking $\ridgeparam \to 0$, we have that 
\[
   \frac{1}{\ridgeparam} \cdot \E_{\ytrain, \Xtrain} \qty(\frac{1}{n}\norm{\bW \boldb_\star(\ridgeparam) - \effnoise}_2^2 - \ofitlevel \sigma^2) \to \infty,
\]
which by inequality Eq. (\ref{eq:expected_dual}) implies $\E_{\ytrain, \Xtrain}\qty[\auxmin(\ofitlevel)] = +\infty$ and any lower bound holds trivially. Thus we will assume that \revised{$p/n \geq 1 - \tau$, in which case we will be able to obtain a non-vacuous result}. Let $\ridgeparam_\star$ be the random variable dependent on $\Xtrain$ that satisfies Eq. (\ref{eq:lagrange_multiplier_fixed_point}). Then by Eq. (\ref{eq:expected-constraint}) we have that
\[
\E_{\by, \bX}\qty(\frac{1}{n}\norm{\bW \boldb_\star(\ridgeparam_\star) - \effnoise}_2^2 - \ofitlevel \sigma^2) \geq 0.
\]
Thus from Eq. (\ref{eq:expected_dual}) we have that
\begin{align}
    \E_{\ytrain, \Xtrain}\qty[\auxmin(\ofitlevel)] &\geq \E_{\ytrain, \Xtrain} \norm{\boldb_\star(\ridgeparam_\star)}^2 + \frac{n}{p \ridgeparam_\star} \E_{\ytrain, \Xtrain} \qty(\frac{1}{n}\norm{\bW \boldb_\star(\ridgeparam_\star) - \effnoise}_2^2 - \ofitlevel \sigma^2)\nonumber\\ 
    &\geq \E_{\ytrain, \Xtrain} \norm{\boldb_\star(\ridgeparam_\star)}^2.\label{eq:lower_bound_by_norm}
\end{align}
Hence from Eq. (\ref{eq:expected-norm}) we have
\begin{equation}\label{eq:excess-lb-opt-lagrange}
    \E_{\ytrain, \Xtrain}\qty[\auxmin(\ofitlevel)] \geq \sigma^2\frac{n}{p} \cdot \E_{\Xtrain}~ \frac{1}{n} \sum\limits_{i = 1}^n \frac{\lambda_i}{(\lambda_i + \ridgeparam_\star)^2}.
\end{equation}

Thus to lower bound $\E[\auxmin(\tau)]$ we can try to lower bound the following
\begin{equation}\label{eq:eigenvalue_objective}
    \frac{1}{n} \sum\limits_{i = 1}^n \frac{\lambda_i}{(\lambda_i + \ridgeparam_\star)^2}~~ \text{ subject to } ~~\frac{1}{n} \sum\limits_{i=1}^n \qty(\frac{\lambda_\star}{\lambda_i + \lambda_\star})^2 = \tau,
\end{equation}
by a quantity that we can later easily bound in expectation over $\Xtrain$. Note that if $\lambda_1 = \ldots = \lambda_n = \lambda$, then 
\begin{equation}\label{eq:equal_eigenvalues}
    \lambda_\star = \lambda \frac{\sqrt{\tau}}{(1 - \sqrt{\tau})}~~ \text{ and } ~~\frac{1}{n} \sum\limits_{i = 1}^n \frac{\lambda_i}{(\lambda_i + \ridgeparam_\star)^2} = \frac{1}{\lambda}(1 - \sqrt{\tau})^2
\end{equation}
and if $\lambda_\star = 0$ then
\begin{equation}\label{eq:lambda_star_zero}
    \sum\limits_{i = 1}^n \frac{\lambda_i}{(\lambda_i + \ridgeparam_\star)^2} = \frac{1}{n} \sum\limits_{i=1}^n \frac{1}{\lambda_i} \geq \frac{n}{\sum\limits_{i=1}^n \lambda_i}
\end{equation}
where the last inequality is the AM-HM inequality (Lemma \ref{lem:AM-HM}). Therefore a possible lower bound for Eq.\ (\ref{eq:eigenvalue_objective}) is 
\begin{equation}\label{eq:deterministic_lower_bound}
    \frac{1}{n} \sum\limits_{i = 1}^n \frac{\lambda_i}{(\lambda_i + \ridgeparam_\star)^2} \geq \frac{n}{\sum\limits_{i = 1}^n \lambda_i} (1 - \sqrt{\tau})^2,
\end{equation}
as it satisfies the edge cases in Eqs.\ (\ref{eq:equal_eigenvalues}) and (\ref{eq:lambda_star_zero}). We will show that this inequality in fact holds.

By Eq.\ (\ref{eq:lagrange_multiplier_fixed_point}), to prove that Eq.\ (\ref{eq:deterministic_lower_bound}) holds, it suffices to show that for any $a_1 \geq \ldots \geq a_n \geq 0$ and $x \geq 0$,
\[
\frac{1}{n} \sum\limits_{i = 1}^n \frac{a_i}{(a_i + x)^2} \geq \frac{n}{\sum\limits_{i = 1}^n a_i} \qty(1 - \sqrt{\frac{1}{n} \sum\limits_{i=1}^n \qty(\frac{x}{a_i + x})^2})^2
\]
which upon rearranging is equivalent to
\begin{equation}\label{eq:equiv_inequality}
    \sqrt{\frac{1}{n} \sum\limits_{i = 1}^n \frac{a_i}{(a_i + x)^2}} \sqrt{\frac{1}{n}\sum\limits_{i = 1}^n a_i} + \sqrt{\frac{1}{n} \sum\limits_{i=1}^n \qty(\frac{x}{a_i + x})^2} \geq 1.
\end{equation}
Let us fix $a_1 \geq \ldots \geq a_n \geq 0$ and consider the function $g : [0, \infty) \to [0, \infty)$ defined as the left-hand side of the above
\begin{equation}\label{eq:equiv_ineq_function}
    g(x) = \sqrt{\frac{1}{n} \sum\limits_{i = 1}^n \frac{a_i}{(a_i + x)^2}} \sqrt{\frac{1}{n}\sum\limits_{i = 1}^n a_i} + \sqrt{\frac{1}{n} \sum\limits_{i=1}^n \qty(\frac{x}{a_i + x})^2}.
\end{equation}
We can show $g(x)$ is decreasing by computing the derivative which is given as follows
\[
g'(x) = \frac{1}{n} \sum\limits_{i=1}^n \frac{a_i}{(a_i + x)^3} \cdot \qty[\qty(\frac{1}{n} \sum\limits_{i = 1}^n \frac{1}{(a_i + x)^2})^{-1/2} - \qty(\frac{1}{n} \sum\limits_{i=1}^n a_i)^{1/2} \cdot \qty(\frac{1}{n} \sum\limits_{i = 1}^n \frac{a_i}{(a_i + x)^2})^{-1/2} ].
\]
Therefore to show that $g'(x) \leq 0$, by rearranging the above expression it suffices show that
\[
\frac{1}{n} \sum\limits_{i=1}^n \frac{a_i}{(a_i + x)^2} \leq \qty(\frac{1}{n} \sum\limits_{i=1}^n a_i) \cdot \qty(\frac{1}{n} \sum\limits_{i=1}^n \frac{1}{(x + a_i)^2}).
\]
The above however follows immediately from Chebyshev's Sum Inequality (Lemma \ref{lem:chebyshev-sum-inequality}). Therefore we have shown that $g(x)$ is decreasing. Since in Eq.\ (\ref{eq:equiv_ineq_function}) it is easy to see that $\lim\limits_{x \to \infty} g(x) = 1$, it follows that $g(x) \geq 1$ for all $x \geq 0$, which proves Eq.\ (\ref{eq:equiv_inequality}). Now taking Eq.\ (\ref{eq:equiv_inequality}) and plugging in to Eq.\ (\ref{eq:excess-lb-opt-lagrange}) we get that
\begin{align*}
    \E_{\ytrain, \Xtrain}~ \auxmin(\ofitlevel)  &\geq \sigma^2\frac{n}{p} \cdot \E_{\Xtrain}~ \frac{1}{n} \sum\limits_{i = 1}^n \frac{\lambda_i}{(\lambda_i + \ridgeparam_\star)^2}\\
    &\geq \sigma^2\frac{n}{p} \cdot  \E_{\Xtrain}~\frac{n}{\sum\limits_{i = 1}^n \lambda_i} (1 - \sqrt{\tau})^2\\
    &\geq \sigma^2\frac{n}{p} (1 - \sqrt{\tau})^2 \frac{n}{\E_{\Xtrain} \sum\limits_{i = 1}^n \lambda_i} && (\text{Jensen's Inequality})\\
    &= \sigma^2\frac{n}{p} (1 - \sqrt{\tau})^2,
\end{align*}
where the last equality holds since $\E_{\Xtrain} \sum\limits_{i = 1}^n \lambda_i = \E_{\Xtrain} \Tr(\bG) = n$.
\end{proof}
\begin{remark}
One may wonder if the following lower bound
\begin{equation}\label{eq:invalid_lower_bound}
    \frac{1}{n} \sum\limits_{i = 1}^n \frac{\lambda_i}{(\lambda_i + \ridgeparam_\star)^2} \geq \frac{1}{n} \sum\limits_{i=1}^n \frac{1}{\lambda_i} (1 - \sqrt{\tau})^2,
\end{equation}
could have been used in place of Eq.\ (\ref{eq:deterministic_lower_bound}) as it also satisfies the same edge cases. This however is not a valid inequality. To see this, take $n = 2$ and let $\lambda_1 = \varepsilon$, $\lambda_2 = 1 - \varepsilon$, and $\tau \in (1/2, 1)$. Then as $\varepsilon \to 0$, it is easy to see since 
\[
\tau = \frac{1}{n} \sum\limits_{i=1}^n \qty(\frac{\lambda_\star}{\lambda_i + \lambda_\star})^2 \approx \frac{1}{2} + \frac{1}{2} \qty(\frac{\lambda_\star}{1 + \lambda_\star})^2
\]
it must be that $\lambda_\star$ is bounded below and does not go to 0. However that means that the left-hand side of Eq.\ (\ref{eq:invalid_lower_bound}) is bounded above whereas the right hand side goes to infinity.    
\end{remark}

\begin{remark}\label{rmk:ridge}
\rev{The lower bound via weak duality (see Eq.\ (\ref{eq:lower_bound_by_norm})) reveals that the minimal excess linear loss is lower bounded by the test loss of a ridge estimator on a different, auxiliary  problem coming from Lemma \ref{lem:min_excess_risk_lower_bound}. 
In this auxiliary problem the covariates are whitened and the training targets are the noise $\effnoise$ as opposed to $\ytrain$. The ridge parameter of the estimator is exactly the Lagrange multiplier corresponding to the training error constraint. This ridge estimator requires oracle knowledge of the feature covariance $\bSigma$ and the noise vector $\effnoise$. Note that the lower bound arises purely from the variance due to label noise since this oracle ridge estimator is an unbiased estimator for the original problem. 
}
\end{remark}

\section{Lower Bounds under Marchenko-Pastur Asymptotics}\label{sec:mp}
In this section we analyze $\minexcess(\ofitlevel; n, p)$  more precisely under additional asymptotic distributional assumptions. \rev{As mentioned earlier, this will allow us to assess the tightness of our general lower bound given in Theorem \ref{thm:general_lower_bound}}. Specifically,
we assume the following setting which has been used to analyse ridge regression in several prior works including \cite{dicker2016ridge, dobriban2018high, hastie2022surprises}:
   \paragraph{[MP]} Assume $n, p \to \infty$ so that $n/p \to \gamma \in (0, +\infty)$. Recall the matrix $\bG = \bW \bW^\sT / p$ which has eigenvalues $\ridgeparam_1, \ldots, \ridgeparam_n$. Let $\mu_{p}$ be the empirical spectral distribution of $\bG$,
     \[
    \mu_{p}(A) = \frac{1}{n} \qty|\{\lambda_i \in A\}|,~~~~A \subset \R.
    \]
    Then we have the following convergence in the weak topology
    \[
        \mu_{p} \to \MP(\gamma)
    \]
    where $\MP(\gamma)$ is the Marchenko-Pastur distribution with aspect ratio $\gamma$ (see Appendix \ref{sec:app-mp}). 
    
\paragraph{[Lin]} The optimal model is linear and noise is additive: 
\[
    y = \bbeta_\star^\sT \phi(\bx) + \varepsilon,
\] 
where the noise $\varepsilon$ satisfies $\E[\varepsilon] = 0$, $\E[\varepsilon^2] = \sigma^2$, and $\E[\varepsilon^{4 + \eta}] < \infty$ for some $\eta > 0$.
\begin{remark}
Assumption \textbf{MP} holds whenever the entries of ~$\bW$ are distributed i.i.d with mean zero and variance $1$. In particular, this holds if $\phi(\bx) \sim \cN(0, \bSigma)$ for any $\bSigma \succ \bzero$. By assuming the optimal model is linear we have that $\minexcess(\ofitlevel) = \auxmin(\ofitlevel)$. Additionally, since $\Var(y \midsmall \bx) = \Var(\varepsilon) = \sigma^2$ almost surely, the inequality in Lemma \ref{lem:expected_noise} is an equality.
\end{remark}
Let $m(z; \gamma)$ denote the Stieltjes transform of the Marchenko-Pastur Law and let $m'(z; \gamma)$ be the derivative with respect to $z$ (see Appendix \ref{sec:app-mp}). Under the above assumptions we have the following analytical characterization of the asymptotic minimal excess error.
\begin{proposition}[MP Asymptotics]\label{prop:mp-asymp}
Under Assumptions \textbf{MP} and \textbf{Lin}, the asymptotic minimal excess error is given by the following analytical expression
\begin{equation}\label{eq:min_excess_loss_analytic}
     \minexcess(\ofitlevel, \gamma) := \lim\limits_{\substack{n, p \to \infty\\ n/p \to \gamma}} \minexcess(\ofitlevel; n, p) = \sigma^2 \gamma [m(-\ridgeparam; \gamma) - \ridgeparam m'(-\ridgeparam; \gamma)]
\end{equation}
where $\ridgeparam$ satisfies the fixed point equation
\begin{equation}\label{eq:asymp_fixed_point}
    \ridgeparam^2 m'(-\ridgeparam; \gamma) = \ofitlevel.
\end{equation}
\end{proposition}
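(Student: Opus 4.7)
The plan is to push the duality analysis from the proof of Theorem~\ref{thm:general_lower_bound} all the way to an equality in the \textbf{MP}+\textbf{Lin} asymptotic regime, and to then identify the resulting expression with Stieltjes transforms of the Marchenko--Pastur law. Under assumption \textbf{Lin}, the regression function is itself linear, so $f_\star = \beta_\star$, the vector $\linnoise$ vanishes, $\effnoise = \noise$ has i.i.d.\ components with variance $\sigma^2$, and $\minexcess(\ofitlevel; n, p) = \auxmin(\ofitlevel)$ by Lemma~\ref{lem:excess_linear_loss}. The convex program defining $\auxmin(\ofitlevel)$ is a strictly feasible QCQP, so Slater's condition gives strong duality and $\auxmin(\ofitlevel) = \sup_{\lmult \ge 0} \cL(\boldb_\star(\lmult),\lmult)$. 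Assumption \textbf{Lin} also turns Lemma~\ref{lem:expected_noise} into an equality, so combining this with the explicit form of $\boldb_\star(\ridgeparam)$ in Eq.~(\ref{eq:lagrangian_opt_b}) yields an exact closed form for $\E\,\cL(\boldb_\star(\ridgeparam),\ridgeparam)$ in terms of the spectral sums appearing in Eqs.~(\ref{eq:expected-norm}) and (\ref{eq:expected-constraint}).

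Next I would simplify that closed form using the telescoping identity
\[
\frac{\lambda_i}{(\lambda_i + \ridgeparam)^2} + \frac{1}{\ridgeparam}\qty(\frac{\ridgeparam}{\lambda_i + \ridgeparam})^2 = \frac{1}{\lambda_i + \ridgeparam},
\]
which collapses the two spectral sums into
\[
\E\,\cL(\boldb_\star(\ridgeparam),\ridgeparam) = \frac{\sigma^2 n}{p}\,\E\qty[\frac{1}{n}\sum_{i=1}^n \frac{1}{\lambda_i + \ridgeparam}] - \frac{\ofitlevel \sigma^2 n}{p\,\ridgeparam}.
\]
Recognizing the empirical Stieltjes transform $m_{\mu_p}(-\ridgeparam) = \tfrac{1}{n}\sum_i \tfrac{1}{\lambda_i + \ridgeparam}$ and invoking assumption \textbf{MP} together with $n/p \to \gamma$, standard random matrix theory gives the pointwise limit
\[
F(\ridgeparam) := \lim_{n,p\to\infty}\E\,\cL(\boldb_\star(\ridgeparam),\ridgeparam) = \sigma^2 \gamma\, m(-\ridgeparam;\gamma) - \frac{\ofitlevel\sigma^2 \gamma}{\ridgeparam}.
\]

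Then I would optimize $F$ over $\ridgeparam > 0$. Its derivative is $F'(\ridgeparam) = -\sigma^2\gamma\,m'(-\ridgeparam;\gamma) + \sigma^2\gamma\ofitlevel/\ridgeparam^2$, so the first-order condition reads exactly $\ridgeparam^2 m'(-\ridgeparam;\gamma) = \ofitlevel$, which is Eq.~(\ref{eq:asymp_fixed_point}). Substituting $\ofitlevel/\ridgeparam = \ridgeparam\, m'(-\ridgeparam;\gamma)$ back into $F$ gives $F(\ridgeparam) = \sigma^2\gamma[m(-\ridgeparam;\gamma) - \ridgeparam\, m'(-\ridgeparam;\gamma)]$, which matches Eq.~(\ref{eq:min_excess_loss_analytic}). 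Convexity/concavity of the relevant branches (the Stieltjes transform is completely monotone on $(0,\infty)$ in $-z$) ensures this critical point is the unique maximizer.

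The main obstacle will be rigorously justifying the interchange of $\sup_{\lmult}$ with the asymptotic limit. For the lower direction, pointwise convergence $\E\,\cL(\boldb_\star(\ridgeparam),\ridgeparam) \to F(\ridgeparam)$ combined with $\auxmin(\ofitlevel) \ge \cL(\boldb_\star(\ridgeparam),\ridgeparam)$ for every $\ridgeparam$ immediately gives $\liminf_{n,p}\E\auxmin(\ofitlevel) \ge \sup_{\ridgeparam} F(\ridgeparam)$. For the matching upper bound, I would show that for each fixed $\ridgeparam$ in a neighborhood of the optimizer, $\cL(\boldb_\star(\ridgeparam),\ridgeparam)$ concentrates around its mean (by applying a Hanson--Wright-type bound to the quadratic forms in $\effnoise$, using the $(4+\eta)$-th moment assumption from \textbf{Lin}), and that the family $\{\cL(\boldb_\star(\cdot),\cdot)\}$ is equicontinuous on a compact interval containing the optimizer. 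Together with the convergence of $\E m_{\mu_p}(-\ridgeparam)$ to $m(-\ridgeparam;\gamma)$ (which is uniform on compact subsets of the open right half line by the Helly-type theorems for Stieltjes transforms), this upgrades pointwise convergence to uniform convergence, allowing the interchange and yielding $\limsup_{n,p}\E\auxmin(\ofitlevel) \le \sup_{\ridgeparam} F(\ridgeparam)$. The underlying concentration and Stieltjes-convergence facts are standard in the ridge-regression asymptotics literature~\cite{dicker2016ridge, dobriban2018high, hastie2022surprises}, so the task reduces to carefully assembling these inputs.
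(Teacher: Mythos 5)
Your proposal is correct and runs along the same dual/Lagrangian route as the paper: both pass to $\auxmin(\ofitlevel)$ (which equals $\minexcess$ under \textbf{Lin}), both use the closed-form ridge-type dual optimizer $\boldb_\star(\ridgeparam)$, and both invoke the quadratic-form concentration of Lemma \ref{prop:convergence_quad_form} together with \textbf{MP} convergence of the spectral sums. Where you differ is in how the fixed-point equation and the final formula emerge: the paper selects $\ridgeparam_\star$ so that the training-loss constraint becomes asymptotically tight and then argues (somewhat heuristically) that $(\boldb_\star(\ridgeparam_\star),\ridgeparam_\star)$ is asymptotically the KKT pair, whereas you maximize the limiting dual function $F(\ridgeparam)=\sigma^2\gamma\, m(-\ridgeparam;\gamma)-\ofitlevel\sigma^2\gamma/\ridgeparam$, obtained via the telescoping identity that collapses the two spectral sums into a single Stieltjes transform, and recover Eq.\ (\ref{eq:asymp_fixed_point}) as the stationarity condition. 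Your packaging is arguably cleaner (the fixed point is explained rather than imposed), at the cost of having to justify strong duality and the interchange of $\sup_{\ridgeparam}$ with the limit, which you correctly identify as the main technical burden; the paper's version sidesteps an explicit interchange but its asymptotic-KKT step is no more rigorous, and it establishes an almost-sure limit whereas you work in expectation, so your route additionally needs the concentration/uniform-integrability argument you sketch. Two small fixes: uniqueness of the maximizer of $F$ does not follow from concavity (F need not be concave); it follows because $F'(\ridgeparam)=\sigma^2\gamma\,\ridgeparam^{-2}\,[\,\ofitlevel-\ridgeparam^2 m'(-\ridgeparam;\gamma)\,]$ and $\ridgeparam\mapsto\ridgeparam^2 m'(-\ridgeparam;\gamma)$ is strictly increasing, so $F'$ changes sign exactly once. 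Also, Slater's condition requires $\ofitlevel>0$ (and, when $\gamma>1$, that $\ofitlevel$ exceed the asymptotic minimal training error $1-1/\gamma$, which is implicitly needed for Eq.\ (\ref{eq:asymp_fixed_point}) to be solvable in both proofs); the boundary case $\ofitlevel=0$ should be handled separately.
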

\begin{proof}
For any $\ridgeparam \geq 0$, by Assumption \textbf{Lin} we can use Proposition \ref{prop:convergence_quad_form} to get the asymptotic versions of Eqs.\ (\ref{eq:expected-norm}), (\ref{eq:expected-constraint})
\begin{align*}
    \norm{\boldb_\star(\ridgeparam)}^2 &\sim \sigma^2 \frac{1}{p} \Tr(\bG(\ridgeparam)^{-1} \bG \bG(\ridgeparam)^{-1})\\
    \frac{1}{n} \norm{\bW \boldb_\star(\ridgeparam) - \effnoise}^2 &\sim \sigma^2\frac{1}{n} \Tr(\bG \bG(\ridgeparam)^{-1} - \id)^2
\end{align*}
where $a_n \sim b_n$ denotes that  $|a_n - b_n| \to 0$ almost surely. Then by Assumption \textbf{MP}
\begin{align*}
\frac{1}{p} \Tr(\bG(\ridgeparam)^{-1} \bG \bG(\ridgeparam)^{-1}) &= \frac{1}{p} \sum\limits_{i=1}^n \frac{\lambda_i}{(\lambda_i + \ridgeparam)^2}
    =  \frac{n}{p} \int \frac{s}{(s + \ridgeparam)^2} \dd{\mu_p(s)} \to \gamma \int \frac{s}{(s + \ridgeparam)^2} \dd{\MP_\gamma(s)},\\
    \frac{1}{n} \Tr(\bG \bG(\ridgeparam)^{-1} - \id)^2 &= \frac{1}{n} \sum\limits_{i=1}^n \qty(\frac{\ridgeparam}{\lambda_i + \ridgeparam})^2 = \int \qty(\frac{\ridgeparam}{s + \ridgeparam})^2 \dd{\mu_p(s)} \to \int \qty(\frac{\ridgeparam}{s + \ridgeparam})^2 \dd{\MP_\gamma(s)}.
\end{align*}
Therefore if $\ridgeparam_\star$ is the unique solution to the fixed point equation
\begin{equation}\label{eq:asymp_fixed_point_integral}
    \int \qty(\frac{\ridgeparam_\star}{s + \ridgeparam_\star})^2 \dd{\MP_\gamma(s)} = \ofitlevel 
\end{equation}
then the constraint becomes tight almost surely
\[
\frac{1}{n} \norm{\bW \boldb_\star(\ridgeparam_\star) - \effnoise}^2 - \ofitlevel \sigma^2 \to 0.
\]
The pair $(\boldb(\ridgeparam_\star), \ridgeparam_\star)$ is the unique KKT point and since a minimizer of the primal problem exists due to continuity of the objective and compactness of the constraint set, this pair is asymptotically the primal/dual optimal variables and
\[
\minexcess(\ofitlevel; n, p) \sim \norm{\boldb_\star(\ridgeparam_\star)}^2 \to \sigma^2 \gamma \int \frac{s}{(s + \ridgeparam_\star)^2} \dd{\MP_\gamma(s)}
\]
where $\ridgeparam_\star$ satisfies Eq.\ (\ref{eq:asymp_fixed_point_integral}). We can write the integrals that appear above in terms of the Stieltjes transform
\begin{align*}
    \int \frac{s}{(s + z)^2} \dd{\MP_\gamma(s)} &= \int \qty[\frac{1}{s + z}  - \frac{z}{(s + z)^2}]\dd{\MP_\gamma(s)}
    = m(-z; \gamma) - z m'(-z; \gamma), \\
    \int \qty(\frac{z}{s + z})^2 \dd{\MP_\gamma(s)} &= z^2 \int \frac{1}{(s + z)^2} \dd{\MP_\gamma(s)} = z^2 m'(-z; \gamma).
\end{align*}
Thus we can write $\minexcess(\ofitlevel, \gamma)$ as
\[
\sigma^2 \gamma [m(-\ridgeparam; \gamma) - \ridgeparam m'(-\ridgeparam; \gamma)] \st \ridgeparam^2 m'(-\ridgeparam; \gamma) = \ofitlevel.
\]
\end{proof}
\begin{remark}
    Note that by Remark \ref{rmk:ridge}, $\cE_\star(\tau, \gamma)$ should be equal to the asymptotic test risk of a ridge regression predictor on isotropic Gaussian covariates with ridge parameter $\lambda$ satisfying Eq.\ (\ref{eq:asymp_fixed_point}), when the target function is zero. Indeed our calculations match results obtained in previous calculations of this limiting risk, for example Corollary 5 in \cite{hastie2022surprises}.
\end{remark}
\noindent For convenience in later proofs, let use define the following functions
\begin{align}
    E(\ridgeparam, \gamma) &= m(-\ridgeparam; \gamma) - \ridgeparam m'(-\ridgeparam; \gamma),\label{eq:func_E}\\ 
    f(\ridgeparam, \gamma) &= \ridgeparam^2 m'(-\ridgeparam; \gamma). \label{eq:func_f}
\end{align}
Note that $f$ is strictly increasing in $\lambda$, so we can define the inverse function $f^{-1}(\ofitlevel, \gamma)$ so that 
\begin{equation}\label{eq:inverse}
    f(f^{-1}(\ofitlevel, \gamma), \gamma) = \ofitlevel.
\end{equation}
Using our characterization of the minimal excess error in Proposition \ref{prop:mp-asymp} we will now derive lower bounds in Theorems \ref{thm:mp-law-lower-bound} and \ref{thm:mp-law-local} and an exact expression when $\gamma = 1$ in Theorem \ref{thm:peak}.
\begin{figure}\label{fig:mp-lower-bound}
    \centering
    \begin{subfigure}[t]{0.45\textwidth}
        \centering
        \includegraphics[width=\linewidth]{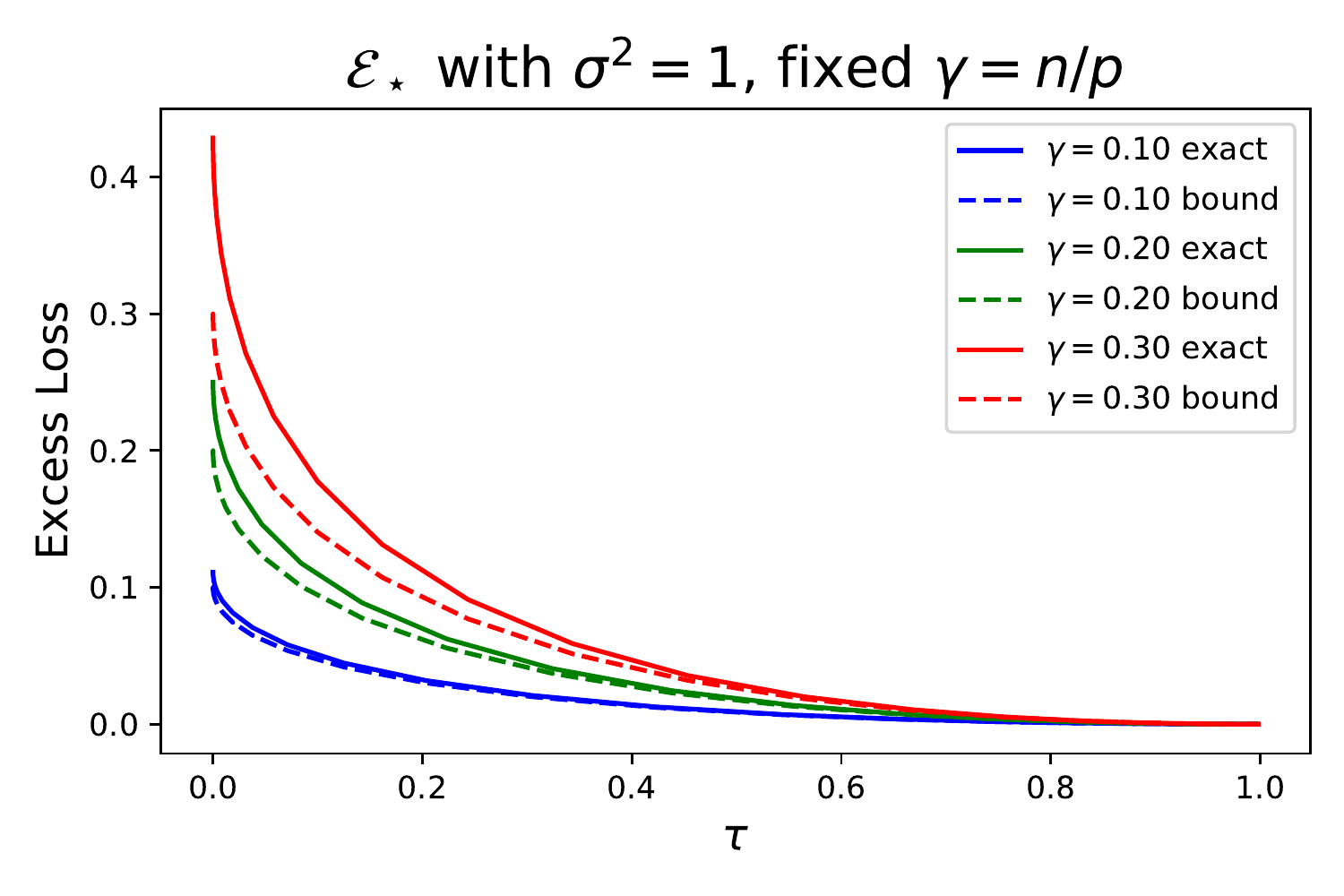}
        \caption{For fixed $\gamma = n/p$ we plot the exact value of $\minexcess(\ofitlevel)$ using Eq.\ (\ref{eq:min_excess_loss_analytic}) [solid] and the lower bound from Eq.\ (\ref{eq:mp-lb}) [dashed].}
    \end{subfigure}
    \begin{subfigure}[t]{0.45\textwidth}
        \centering
        \includegraphics[width=\linewidth]{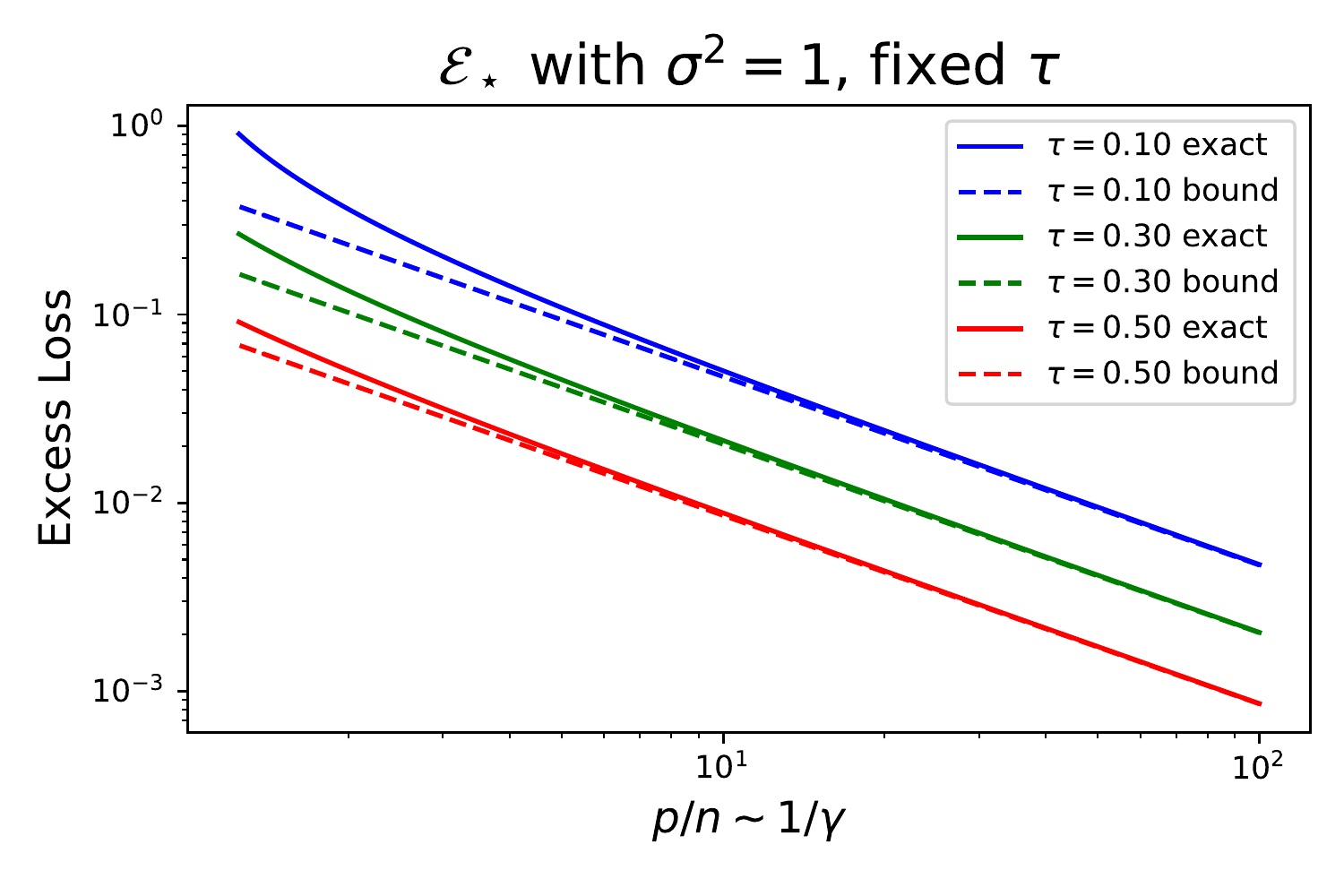}
        \caption{For fixed $\ofitlevel$ we plot the exact value of $\minexcess(1/\gamma)$ using Eq.\ (\ref{eq:min_excess_loss_analytic}) [solid] and the lower bound from Eq.\ (\ref{eq:mp-lb}) [dashed].}
    \end{subfigure}
\end{figure}

\begin{theorem}[MP Lower Bound]\label{thm:mp-law-lower-bound}
\revised{By Theorem \ref{thm:general_lower_bound}, for} fixed $\ofitlevel \in [0, 1]$, the minimum excess error satisfies
\begin{equation}\label{eq:mp-lb}
\minexcess(\ofitlevel, \gamma) \geq \sigma^2 \gamma (1 - \sqrt{\ofitlevel})^2
\end{equation}
for all $\gamma \in (0, +\infty)$. The dependence on $\ofitlevel$ is \revised{in fact} tight since 
\begin{equation}\label{eq:mp-lb-tight}
    \inf_{\gamma \in (0, \infty)} \minexcess(\ofitlevel, \gamma) / \gamma = \sigma^2 (1 - \sqrt{\ofitlevel})^2.
\end{equation}
\end{theorem}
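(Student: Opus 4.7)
The plan is to prove the two claims of the theorem separately: the lower bound (\ref{eq:mp-lb}) will follow immediately from the universal bound of Theorem \ref{thm:general_lower_bound}, while the tightness claim (\ref{eq:mp-lb-tight}) will be obtained by computing the $\gamma \to 0^+$ limit of $\minexcess(\ofitlevel, \gamma)/\gamma$ using the analytic formula of Proposition \ref{prop:mp-asymp}.

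For (\ref{eq:mp-lb}), I would simply observe that the pre-limit bound of Theorem \ref{thm:general_lower_bound} gives $\minexcess(\ofitlevel; n, p) \geq \sigma^2 (n/p)(1-\sqrt{\ofitlevel})^2$, which under Assumptions \textbf{MP} and \textbf{Lin} passes to $\minexcess(\ofitlevel, \gamma) \geq \sigma^2 \gamma(1-\sqrt{\ofitlevel})^2$ as $n/p \to \gamma$, by the limit identification in Proposition \ref{prop:mp-asymp}. Dividing by $\gamma$ immediately yields $\inf_{\gamma > 0} \minexcess(\ofitlevel, \gamma)/\gamma \geq \sigma^2(1-\sqrt{\ofitlevel})^2$, which is one half of (\ref{eq:mp-lb-tight}).

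For the matching upper bound in (\ref{eq:mp-lb-tight}), I would evaluate $\minexcess(\ofitlevel,\gamma)/\gamma = \sigma^2 E(\ridgeparam(\gamma), \gamma)$, where $\ridgeparam(\gamma)$ solves $f(\ridgeparam, \gamma) = \ofitlevel$, in the limit $\gamma \to 0^+$. Since the whitened sample covariance $\bG = \bW\bW^\sT / p$ concentrates to the identity when $p \gg n$, the Marchenko-Pastur law $\MP(\gamma)$ converges weakly to $\delta_1$ as $\gamma \to 0$. Applying dominated convergence to the integrals defining $m(-\ridgeparam;\gamma)$ and $m'(-\ridgeparam;\gamma)$, for any fixed $\ridgeparam > 0$ one gets
\[
m(-\ridgeparam; \gamma) \xrightarrow{\gamma \to 0} \frac{1}{1+\ridgeparam}, \qquad m'(-\ridgeparam; \gamma) \xrightarrow{\gamma \to 0} \frac{1}{(1+\ridgeparam)^2}.
\]
The limiting fixed point equation $\ridgeparam^2/(1+\ridgeparam)^2 = \ofitlevel$ is solved by $\ridgeparam_0 = \sqrt{\ofitlevel}/(1-\sqrt{\ofitlevel})$, and a direct algebraic simplification gives $E(\ridgeparam_0, 0) = 1/(1+\ridgeparam_0)^2 = (1-\sqrt{\ofitlevel})^2$. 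This shows $\lim_{\gamma \to 0^+} \minexcess(\ofitlevel, \gamma)/\gamma = \sigma^2(1-\sqrt{\ofitlevel})^2$, matching the lower bound and establishing (\ref{eq:mp-lb-tight}).

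The main technical obstacle is justifying that $\ridgeparam(\gamma) \to \ridgeparam_0$ rather than merely obtaining pointwise limits of $f(\ridgeparam, \cdot)$. I would handle this by noting that $f$ is jointly continuous in $(\ridgeparam, \gamma)$ on a neighborhood of $(\ridgeparam_0, 0)$ (dominated convergence again, since integrands are uniformly bounded once $\ridgeparam$ stays away from $0$) and strictly increasing in $\ridgeparam$ as remarked after (\ref{eq:func_f}). An elementary bracketing argument using this monotonicity — pick $\ridgeparam_0 \pm \varepsilon$ and use pointwise convergence of $f$ at these values to trap $\ridgeparam(\gamma)$ for small $\gamma$ — gives continuity of the inverse $f^{-1}(\ofitlevel, \cdot)$ at $\gamma = 0$. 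Combined with continuity of $E$ at $(\ridgeparam_0, 0)$, this completes the proof.
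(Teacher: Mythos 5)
Your proposal is correct and follows essentially the same route as the paper: the bound (\ref{eq:mp-lb}) is inherited from Theorem \ref{thm:general_lower_bound}, and tightness is obtained by sending $\gamma \to 0$, where $\MP(\gamma)$ collapses to a point mass at $1$, the fixed point equation becomes $\ridgeparam^2/(1+\ridgeparam)^2 = \ofitlevel$ with solution $\sqrt{\ofitlevel}/(1-\sqrt{\ofitlevel})$, and the limiting value $(1-\sqrt{\ofitlevel})^2$ matches the lower bound so the limit is the infimum. Your bracketing argument for $\ridgeparam(\gamma) \to \ridgeparam_0$ simply fills in the step the paper dispatches with ``hence by continuity,'' which is a welcome but not divergent addition.
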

\remove{Note that Eq.\ (\ref{eq:mp-lb-tight}) directly implies Eq.\ (\ref{eq:mp-lb}), hence we will just show that Eq.\ (\ref{eq:mp-lb-tight}) holds. To show Eq.\ (\ref{eq:mp-lb-tight}), we will prove in Lemma \ref{thm:decreasing_error} that the quantity $\minexcess(\ofitlevel, \gamma) / \gamma$ is increasing in $\gamma$, hence the infimum is given by the limit as $\gamma \to 0$ which turns out to be simple to compute.} 
\begin{proof}
    Observe that as $\gamma \to 0$, the spectral eigenvalue distribution of \revised{$\bW \bW^\sT / p$} is converging to a point mass at 1. Therefore, \revised{recalling the function $f$ defined in Eq.\ (\ref{eq:func_f}),} by the Dominated Convergence Theorem
    \[
    \lim_{\gamma \to 0} f(\ridgeparam, \gamma) =\lim_{\gamma \to 0} \int_{s \in [1 - \sqrt{\gamma}, 1 + \sqrt{\gamma}]} \frac{\ridgeparam^2}{(s + \ridgeparam)^2} \dd{H}_\gamma(s) = \frac{\ridgeparam^2}{(1 + \ridgeparam)^2}
    \]
    hence by continuity
    \[
    \lim_{\gamma \to 0} f^{-1}(\ofitlevel, \gamma) = \frac{\revised{\sqrt{\ofitlevel}}}{(1 - \revised{\sqrt{\ofitlevel}})}.
    \]
    Thus letting $\ridgeparam =  \lim_{\gamma \to 0} f^{-1}(\ofitlevel, \gamma)$, we get
    \[
    \lim_{\gamma \to 0} \frac{\minexcess(\ofitlevel, \gamma)}{\sigma^2 \gamma} = \lim_{\gamma \to 0} \int_{s \in [1 - \sqrt{\gamma}, 1 + \sqrt{\gamma}]} \frac{s}{(s + \ridgeparam)^2} \dd{H}_\gamma(s) = \frac{1}{(1 + \ridgeparam)^2} = (1 - \revised{\sqrt{\ofitlevel}})^2.
    \]
    \revised{By Eq.\ (\ref{eq:mp-lb})}, we have that the limit equals the infimum 
    \[
    \lim_{\gamma \to 0} \frac{\minexcess(\ofitlevel, \gamma)}{\sigma^2 \gamma} = \inf_{\gamma \in (0, \infty)} \frac{\minexcess(\ofitlevel, \gamma)}{\sigma^2 \gamma}
    \]
    which shows Eq.\ (\ref{eq:mp-lb-tight})\remove{, from which Eq.\ (\ref{eq:mp-lb}) is a direct consequence}.
\end{proof}
\revised{From the above Theorem \ref{thm:mp-law-lower-bound} we saw that $\minexcess(\ofitlevel, \gamma) / \gamma \to (1 - \sqrt{\tau})^2$ as $\gamma \to 0$, achieving its infimum in the limit. In the following theorem we will show that moreover $\minexcess(\ofitlevel, \gamma) / \gamma$ is strictly increasing in $\gamma$ which implies in particular that Eq.\ (\ref{eq:mp-lb}) becomes strictly looser as $\gamma$ grows.}

\begin{theorem}\label{thm:decreasing_error}
For \revised{fixed} $\ofitlevel \in (0, 1]$, \revised{the ratio of the minimum excess error $\minexcess(\ofitlevel, \gamma)$ to $\gamma$ is increasing in $\gamma$, that is}
\[
    \revised{\dv{\minexcess(\ofitlevel, \gamma) / \gamma}{\gamma} = \sigma^2} \dv{E(f^{-1}(\ofitlevel, \gamma), \gamma)}{\gamma}\; \revised{ > } \;0,\; \revised{\gamma \in (0, +\infty)}
    \]
\revised{where the function $E$ is defined in Eq.\ (\ref{eq:func_E}).}
\end{theorem}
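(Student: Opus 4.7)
The plan is to compute the total $\gamma$-derivative of $E(f^{-1}(\ofitlevel, \gamma), \gamma)$ via implicit differentiation of $f(f^{-1}(\ofitlevel, \gamma), \gamma) = \ofitlevel$, then exploit a miraculous cancellation that reduces the problem to a one-line monotonicity claim about the MP Stieltjes transform. Writing $\lambda := f^{-1}(\ofitlevel, \gamma) > 0$, the chain rule gives
\[
    \dv{E(\lambda, \gamma)}{\gamma} \;=\; \partial_\gamma E \;-\; \frac{\partial_\lambda E}{\partial_\lambda f}\,\partial_\gamma f.
\]

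\textbf{Key cancellation.} Using $m(-\lambda; \gamma) = \int (s+\lambda)^{-1} \dd \MP_\gamma(s)$ and $m'(-\lambda;\gamma) = \int(s+\lambda)^{-2}\dd\MP_\gamma(s)$, I would first rewrite $E$ and $f$ as
\[
    E(\lambda, \gamma) \;=\; \int \frac{s}{(s+\lambda)^2} \dd \MP_\gamma(s), \qquad f(\lambda, \gamma) \;=\; \int \frac{\lambda^2}{(s+\lambda)^2} \dd \MP_\gamma(s).
\]
Differentiating in $\lambda$ under the integral sign, both integrands become proportional to $s/(s+\lambda)^3$, yielding the algebraic identity $\partial_\lambda E = -\lambda^{-1}\,\partial_\lambda f$. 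Plugging this back, the chain rule expression collapses to
\[
    \dv{E(\lambda, \gamma)}{\gamma} \;=\; \partial_\gamma E + \frac{1}{\lambda}\,\partial_\gamma f \;=\; \partial_\gamma \int \frac{s + \lambda}{(s+\lambda)^2} \dd \MP_\gamma(s) \;=\; \partial_\gamma m(-\lambda;\gamma).
\]
Thus it suffices to show $\partial_\gamma m(-\lambda;\gamma) > 0$ for every $\lambda > 0$ and $\gamma > 0$.

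\textbf{Monotonicity of the Stieltjes transform.} For this I would use the defining quadratic $\gamma z\, m^2 + (z + \gamma - 1) m + 1 = 0$ for the MP Stieltjes transform (Appendix \ref{sec:app-mp}). Implicitly differentiating in $\gamma$ at $z = -\lambda$ and solving,
\[
    \partial_\gamma m(-\lambda; \gamma) \;=\; \frac{m\,(1 - \lambda m)}{2\gamma\lambda\, m + \lambda - \gamma + 1}.
\]
The denominator equals $\sqrt{(\lambda-\gamma+1)^2 + 4\gamma\lambda}$ by the quadratic formula and is strictly positive. For the numerator, evaluating $q(t) := \gamma\lambda t^2 + (\lambda - \gamma + 1) t - 1$ at $t = 1/\lambda$ gives $q(1/\lambda) = 1/\lambda > 0$, while $q(0) = -1 < 0$, so the positive root satisfies $m < 1/\lambda$ and hence $1 - \lambda m > 0$. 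This gives $\partial_\gamma m(-\lambda; \gamma) > 0$ and closes the argument.

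\textbf{Main obstacle.} The technical crux is recognizing the cancellation $\partial_\lambda E = -\partial_\lambda f / \lambda$: without it one is stuck manipulating $\partial_\gamma m'(-\lambda; \gamma)$, which is considerably more cumbersome than the final one-line positivity fact about $\partial_\gamma m(-\lambda; \gamma)$. Once the cancellation is spotted, both remaining steps are elementary computations with the MP fixed-point equation.
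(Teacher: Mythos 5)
Your proof is correct and follows essentially the same route as the paper: the chain rule combined with implicit differentiation of $f(f^{-1}(\ofitlevel,\gamma),\gamma)=\ofitlevel$ collapses the total derivative to $\pdv{m(-\ridgeparam;\gamma)}{\gamma}$ evaluated at $\ridgeparam = f^{-1}(\ofitlevel,\gamma)$, which is then shown to be strictly positive. The only differences are in execution, and both of your variants check out: you obtain the collapse via the pointwise identity $E(\ridgeparam,\gamma)+\ridgeparam^{-1}f(\ridgeparam,\gamma)=m(-\ridgeparam;\gamma)$ (equivalently $\partial_\ridgeparam E = -\ridgeparam^{-1}\partial_\ridgeparam f$) rather than the paper's term-by-term computation involving $m''$, and you prove the positivity of $\pdv{m(-\ridgeparam;\gamma)}{\gamma}$ (the paper's Lemma \ref{lem:stieltjes_gamma_increasing}) by implicitly differentiating the Marchenko--Pastur self-consistent quadratic and locating the root in $(0,1/\ridgeparam)$, instead of the paper's rationalization of the closed-form Stieltjes transform.
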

\begin{proof}
Let $\ridgeparam = f^{-1}(\ofitlevel, \gamma)$. By the chain rule
\begin{equation}\label{eq:chain-rule}
    \dv{E(f^{-1}(\ofitlevel, \gamma), \gamma)}{\gamma} = \pdv{E(\ridgeparam, \gamma)}{\gamma} + \pdv{E(\ridgeparam, \gamma)}{\ridgeparam} \pdv{f^{-1}(\ofitlevel, \gamma)}{\gamma}.
\end{equation}
We will compute the following three terms in Eq.\ (\ref{eq:chain-rule})
\[
(1):~~ \pdv{E(\ridgeparam, \gamma)}{\gamma},~~~~(2):~~ \pdv{E(\ridgeparam, \gamma)}{\ridgeparam},~~~~(3):~~\pdv{f^{-1}(\ofitlevel, \gamma)}{\gamma}.
\]
The computation of the first two terms is direct
\begin{align*}
    \pdv{E(\ridgeparam, \gamma)}{\gamma} &= \pdv{m(-\ridgeparam; \gamma)}{\gamma} - \ridgeparam \pdv{m'(-\ridgeparam; \gamma)}{\gamma}, \\
    \pdv{E(\ridgeparam, \gamma)}{\ridgeparam} &= -2m'(-\ridgeparam; \gamma) + \ridgeparam m''(-\ridgeparam; \gamma).
\end{align*}
For the third term, we take the derivative with respect to $\gamma$ on both sides of Eq.\ (\ref{eq:inverse}) which yields
\[
\pdv{f(\ridgeparam, \gamma)}{\ridgeparam} \pdv{f^{-1}(\ofitlevel, \gamma)}{\gamma}  + \pdv{f(\ridgeparam, \gamma)}{\gamma} = 0.
\]
Hence after re-arranging we have
\[
    \pdv{f^{-1}(\ofitlevel, \gamma)}{\gamma} = -\frac{\pdv{f(\ridgeparam, \gamma)}{\gamma}}{\pdv{f(\ridgeparam, \gamma)}{\ridgeparam}} = - \frac{\ridgeparam^2 \pdv{m'(-\ridgeparam; \gamma)}{\gamma}}{2 \ridgeparam m'(-\ridgeparam; \gamma) - \ridgeparam^2 m''(-\ridgeparam; \gamma)}.
\]
Computing the product of terms (2) and (3) gives
\[
\pdv{E(\ridgeparam, \gamma)}{\ridgeparam}\pdv{f^{-1}(\ofitlevel, \gamma)}{\gamma} = \ridgeparam \pdv{m'(-\ridgeparam; \gamma)}{\gamma}.
\]
Finally putting everything together in Eq.\ (\ref{eq:chain-rule}) yields
\begin{align*}
    \dv{E(f^{-1}(\ofitlevel, \gamma), \gamma)}{\gamma} &= \pdv{E(\ridgeparam, \gamma)}{\gamma} + \pdv{E(\ridgeparam, \gamma)}{\ridgeparam} \pdv{f^{-1}(\ofitlevel, \gamma)}{\gamma}\\
    &= \pdv{m(-\ridgeparam; \gamma)}{\gamma} - \ridgeparam \pdv{m'(-\ridgeparam; \gamma)}{\gamma} + \ridgeparam \pdv{m'(-\ridgeparam; \gamma)}{\gamma}\\
    &= \pdv{m(-\ridgeparam; \gamma)}{\gamma}\Bigg|_{\ridgeparam = f^{-1}(\ofitlevel, \gamma)} \, \revised{ > } \, 0,
\end{align*}
where the last inequality follows from Lemma \ref{lem:stieltjes_gamma_increasing}.
\end{proof}

The previous bound provides a lower bound on the minimum excess risk that holds for all $\ofitlevel \in [0, 1]$ and becomes tight as $\gamma \to 0$ i.e. the amount of overparametrization becomes large. However, the bound is loose for small $\ofitlevel$ near the interpolation peak $\gamma = 1$. To understand behavior in this regime, for a fixed $\gamma$ we compute a local expansion of $\minexcess(\ofitlevel, \gamma)$ around $\minexcess(0, \gamma)$. Instead of directly analyzing the function $\minexcess(\ofitlevel, \gamma)$, we work with the function $\sqrt{\minexcess(t, \gamma)}$ where we re-parametrize in terms of $t := \sqrt{\ofitlevel}$. We then obtain the bound by taking the first-order Taylor expansion of this function around $t = 0$ and then arguing it is a lower bound by showing the function is convex. In the end we convert this into lower bound on the original function $\minexcess(\ofitlevel, \gamma)$. It is important to re-parametrize in terms of $\sqrt{\ofitlevel}$ since the derivative of $\minexcess(\ofitlevel, \gamma)$ with respect to $\ofitlevel$ at $0$ is $-\infty$, as is also true for the lower bound in Eq.\ (\ref{eq:mp-lb}). Intriguingly, as we explain in Remark \ref{rmk:square-root-sharp}, taking the Taylor expansion of the square-root of $\minexcess$ yields a tighter approximation rather than directly expanding $\minexcess$. 
\begin{figure}[t]
    \centering
    \includegraphics[width=0.8\linewidth]{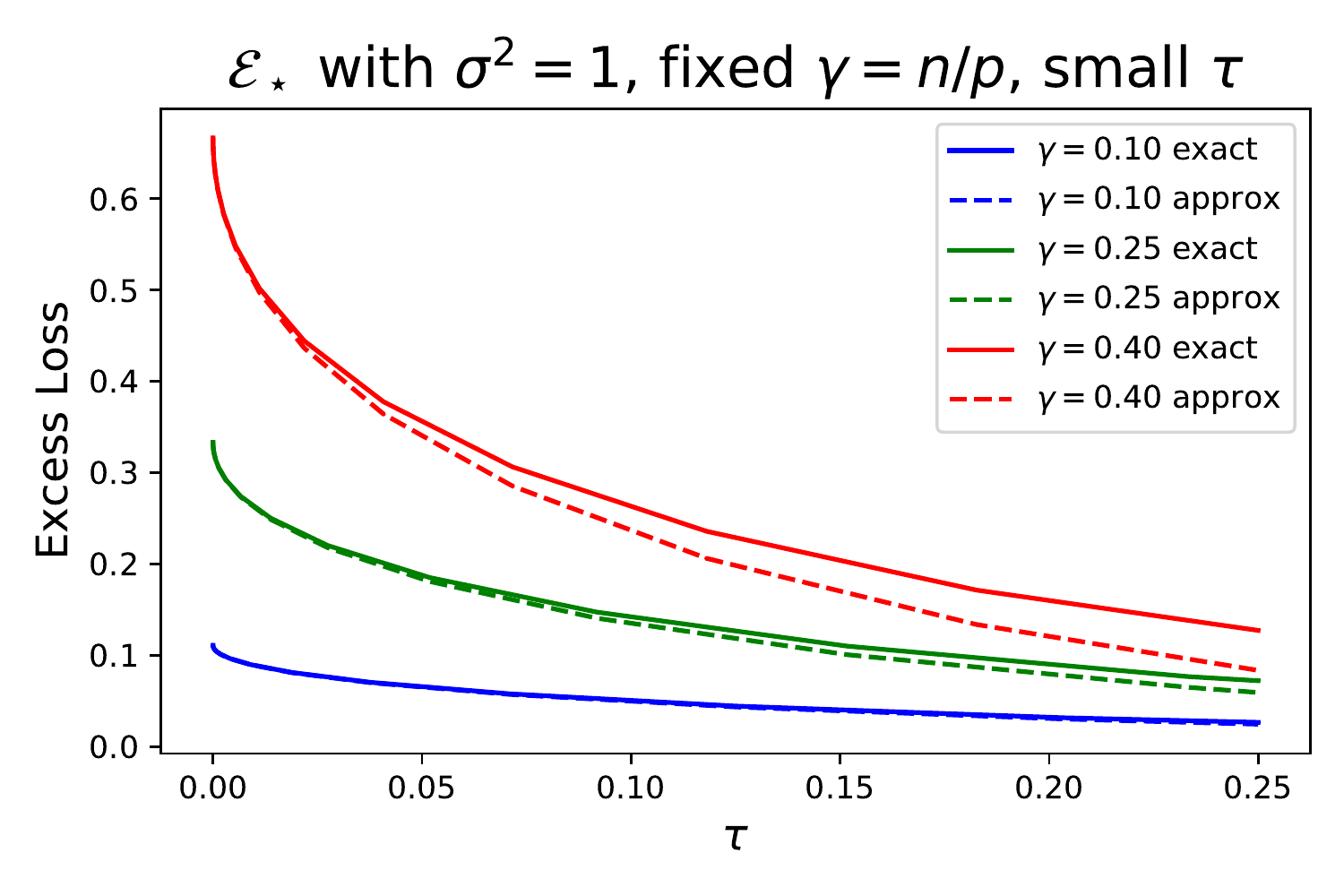}
    \caption{For fixed $\gamma = n/p$ we plot the exact value of $\minexcess(\ofitlevel)$ using Eq.\ (\ref{eq:min_excess_loss_analytic}) [solid] and the first-order Taylor approximation around $\ofitlevel = 0$ from Eq.\ \ref{eq:mp-taylor} [dashed]. Observe that as $\gamma \to 1$ the approximation becomes less tight for larger values of $\ofitlevel$.}
    \label{fig:mp-law-taylor}
\end{figure}
\begin{theorem}[MP lower bound for small $\tau$]\label{thm:mp-law-local}
For fixed $\gamma \in (0, 1)$ and $\ofitlevel \in [0, 1 - \gamma]$ we have
\begin{equation}\label{eq:mp-taylor}
    \minexcess(\ofitlevel, \gamma) \ge  \frac{\sigma^2 \gamma}{1 - \gamma}\qty(1 - \frac{\sqrt{\ofitlevel}}{\sqrt{1 - \gamma}})^2.
\end{equation}
Moreover, the lower bound is tight up to $o(\sqrt{\ofitlevel})$ error.
\end{theorem}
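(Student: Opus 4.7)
My plan is to reparametrize by $t := \sqrt{\ofitlevel}$ and define $h(t) := \sqrt{\minexcess(t^2, \gamma)}$ on $[0, 1]$. I will show $h$ is convex, compute $h(0)$ and $h'(0)$ exactly, and then apply the first-order convexity inequality $h(t) \geq h(0) + h'(0)\, t$ and square both sides (valid when the right-hand side is non-negative, which restricts us to $\ofitlevel \leq 1-\gamma$). The main obstacle I anticipate is the convexity step: while finite-$n$ convexity is routine, cleanly passing to the asymptotic deterministic $h$ requires care with the mode of convergence.

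For convexity, under Assumption \textbf{Lin} we have $\minexcess = \auxmin$ and Lemma~\ref{lem:min_excess_risk_lower_bound} rewrites this as
\[
\sqrt{\auxmin(t^2; n, p)} \;=\; \min\{\|\boldb\|_2 \,:\, \|\bW \boldb - \effnoise\|_2 \leq \sigma \sqrt{n}\, t\}.
\]
A standard convex-analysis fact is that $r \mapsto \min\{\|\boldb\|_2 : \|A\boldb - b\|_2 \leq r\}$ is convex for any $A, b$: given minimizers $\boldb_1, \boldb_2$ at $r_1, r_2$, their convex combination is feasible at the convex-combined radius by the triangle inequality applied to the $\ell_2$ norm. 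Hence $\sqrt{\auxmin(t^2; n, p)}$ is convex in $t$ for every realization of $(\Xtrain, \ytrain)$; by the convergence established in Proposition~\ref{prop:mp-asymp} it converges pointwise almost surely to the deterministic $h(t)$, and a pointwise limit of convex functions is convex.

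To compute $h(0)$ and $h'(0)$ I use the Marchenko--Pastur identities $m(0;\gamma) = 1/(1-\gamma)$ and $m'(0;\gamma) = 1/(1-\gamma)^3$ (valid for $\gamma < 1$). The fixed-point equation $\ofitlevel = \lambda^2 m'(-\lambda;\gamma)$ then gives $\lambda(\ofitlevel) = (1-\gamma)^{3/2}\sqrt{\ofitlevel} + o(\sqrt{\ofitlevel})$ as $\ofitlevel \to 0^+$. Substituting into $\minexcess(\ofitlevel, \gamma) = \sigma^2\gamma[m(-\lambda;\gamma) - \lambda m'(-\lambda;\gamma)]$ and expanding to first order in $\sqrt{\ofitlevel}$ yields
\[
\minexcess(\ofitlevel, \gamma) \;=\; \frac{\sigma^2\gamma}{1-\gamma} \;-\; \frac{2\sigma^2\gamma\sqrt{\ofitlevel}}{(1-\gamma)^{3/2}} \;+\; O(\ofitlevel),
\]
so $h(t) = \sqrt{\sigma^2\gamma/(1-\gamma)} - (\sigma\sqrt{\gamma}/(1-\gamma))\, t + O(t^2)$ and in particular $h(0) = \sqrt{\sigma^2\gamma/(1-\gamma)}$, $h'(0) = -\sigma\sqrt{\gamma}/(1-\gamma)$.

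Concluding: by convexity, $h(t) \geq h(0) + h'(0)\, t = \sqrt{\sigma^2\gamma/(1-\gamma)}\bigl(1 - t/\sqrt{1-\gamma}\bigr)$, which is non-negative precisely for $t \in [0, \sqrt{1-\gamma}]$, i.e.\ $\ofitlevel \in [0, 1-\gamma]$. Squaring on this range yields exactly the claimed bound. The $o(\sqrt{\ofitlevel})$ tightness follows because the Taylor remainder of $h$ at $0$ is $O(t^2)$, and after squaring the gap between $\minexcess(\ofitlevel,\gamma)$ and the squared tangent line is $O(t^2) = O(\ofitlevel) = o(\sqrt{\ofitlevel})$. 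As an alternative to the convergence argument, one could try to establish convexity of $h$ directly from the Stieltjes-transform integrals defining $E$ and $f$, but the resulting computations do not appear to simplify cleanly.
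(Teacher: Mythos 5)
Your proof is correct, and it follows the same overall strategy as the paper: reparametrize in $t = \sqrt{\ofitlevel}$, Taylor-expand $\sqrt{\minexcess}$ at $t=0$ using $\lim_{z\to 0^+} m(-z;\gamma) = 1/(1-\gamma)$ and $\lim_{z\to 0^+} m'(-z;\gamma) = 1/(1-\gamma)^3$ (your $h(0)$ and $h'(0)$ agree exactly with the paper's $P_1(t)$), use convexity of $h$ to turn the tangent line into a global lower bound, and square on the range $\ofitlevel \le 1-\gamma$ where the tangent is non-negative. The genuine difference is how convexity of $t \mapsto \sqrt{\minexcess(t^2,\gamma)}$ is established. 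The paper stays entirely in the asymptotic analytic picture, differentiating $E(f^{-1}(t),\gamma)$ via the Stieltjes transform and arguing from the signs of $\minexcess'$ and $\minexcess''$ — a step that is delicate, since convexity of $\sqrt{\minexcess}$ actually requires $2\,\minexcess\,\minexcess'' \ge (\minexcess')^2$ rather than just $\minexcess' \le 0$ and $\minexcess'' \ge 0$. You instead prove convexity at the source: for every realization, $\sqrt{\auxmin(t^2;n,p)} = \min\{\norm{\boldb}_2 : \norm{\bW\boldb - \effnoise}_2 \le \sigma\sqrt{n}\,t\}$ is convex in $t$ by the standard value-function/triangle-inequality argument, and convexity passes to the deterministic limit via the a.s. pointwise convergence from Proposition \ref{prop:mp-asymp} (using $\minexcess = \auxmin$ under Assumption \textbf{Lin}); since checking convexity only involves finitely many values of $t$ at a time, the $t$-dependent null sets cause no trouble. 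This route is more elementary and arguably more robust than the paper's second-derivative computation. Your use of the one-sided derivative $h'(0^+)$ together with the convex supporting-line inequality is valid, and your tightness argument (an $O(t^2)$ remainder for $h$, hence an $O(\ofitlevel) = o(\sqrt{\ofitlevel})$ gap after squaring) matches, and in fact slightly sharpens, the paper's stated $o(\sqrt{\ofitlevel})$ claim.
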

\begin{proof}
 In the proof we will often suppress the dependence on $\gamma$ and express dependence on $\ofitlevel$ in terms of $t := \sqrt{\ofitlevel}$. In particular, recalling the functions $E$, $f$, and $f^{-1}$ defined in Eqs.\ (\ref{eq:func_E}), (\ref{eq:func_f}), and (\ref{eq:inverse}) we use
\begin{alignat*}{3}
    \minexcess(t) &:= \minexcess(\ofitlevel, \gamma),~~~~ f^{-1}(t) &&:= f^{-1}(\ofitlevel, \gamma)\\
    E(\ridgeparam) &:= E(\ridgeparam, \gamma),~~~~ f(\ridgeparam) &&:= f(\ridgeparam, \gamma),
\end{alignat*}
and as a result 
Eq.\ (\ref{eq:inverse}) becomes
\begin{equation}\label{eq:inverse_reparam}
    f(f^{-1}(t)) = t^2.
\end{equation}
The first-order Taylor approximation of $\sqrt{\minexcess(t)}$ around $t=0$ is given by 
\begin{equation}\label{eq:first-order}
P_1(t) := \sqrt{\minexcess(0)} + t \cdot \dv{t}\sqrt{\minexcess(t)}\Bigr|_{t=0} = \sqrt{\minexcess(0)} +  \frac{t}{2 \sqrt{\minexcess(0)}}  \cdot \minexcess'(0).
\end{equation}
where as $t \to 0^+$
\begin{equation}\label{eq:remainder}
    \sqrt{\minexcess(t)} = P_1(t) + o(t).
\end{equation}
Note that since $\lim_{t \to 0^+} f^{-1}(t, \gamma) = 0$ for any $\gamma$, by Lemma \ref{lem:stieltjes_at_zero} we
\[
\minexcess(0) = \sigma^2 \gamma \lim_{\ridgeparam \to 0^+} [m(-\ridgeparam; \gamma) - \ridgeparam m'(-\ridgeparam; \gamma)] = \sigma^2 \gamma \lim_{\ridgeparam \to 0^+} m(-\ridgeparam; \gamma) = \frac{\sigma^2 \gamma}{1 - \gamma}.
\]
Furthermore letting $\ridgeparam = f^{-1}(t)$, we have by the chain rule
\begin{equation}\label{eq:chain_rule_error}
    \minexcess'(t) = \sigma^2 \gamma \dv{E(f^{-1}(t))}{t} = \sigma^2 \gamma \dv{E(\ridgeparam)}{\ridgeparam} \dv{f^{-1}(t)}{t}.
\end{equation}
We can directly calculate the first term
\begin{align*}
    \dv{E(\ridgeparam)}{\ridgeparam} = -2m(-\ridgeparam; \gamma) + \ridgeparam m''(-\ridgeparam; \gamma).
\end{align*}
By differentiating both sides of Eq.\ (\ref{eq:inverse_reparam}) with respect to $t$ we get
\[
\dv{f^{-1}(t)}{t}\dv{f(\ridgeparam)}{\ridgeparam} = 2t.
\]
Therefore rearranging the above gives
\begin{align*}
    \dv{f^{-1}(t)}{t} = \frac{2t}{f'(\ridgeparam)} &= \frac{2 t}{2 \ridgeparam m'(-\ridgeparam; \gamma) - \ridgeparam^2 m''(-\ridgeparam; \gamma)}\\
    &= \frac{2 \ridgeparam \sqrt{m'(-\ridgeparam; \gamma)}}{2 \ridgeparam m'(-\ridgeparam; \gamma) - \ridgeparam^2 m''(-\ridgeparam; \gamma)}\\
    &=  \frac{2 \sqrt{m'(-\ridgeparam; \gamma)}}{2 m'(-\ridgeparam; \gamma) - \ridgeparam m''(-\ridgeparam; \gamma)}.
\end{align*}
Hence from Eq.\ (\ref{eq:chain_rule_error}) we see
\begin{equation}\label{eq:error_derivative}
    \minexcess'(t) = -2\sigma^2 \gamma \sqrt{m'(-\ridgeparam, \gamma)}.
\end{equation}
Observe that since $m'(-\ridgeparam; \gamma) \geq 0$, the derivative $\minexcess'(t) \leq 0$. Furthermore, since $\ridgeparam$ is increasing in $t$ and $m'(-\ridgeparam; \gamma)$ is decreasing in $\ridgeparam$, it follows that $m'(-\ridgeparam; \gamma)$ is decreasing in $t$. Thus $\minexcess'(t)$ is increasing, i.e. $\minexcess''(t) \geq 0$.
Using Lemma \ref{lem:stieltjes_prime_at_zero}
\[
    \minexcess'(0) = \lim_{\ridgeparam \to 0^+} -2\sigma^2 \gamma \sqrt{m'(-\ridgeparam, \gamma)} = -2\gamma \sigma^2 \frac{1}{(1-\gamma)^{3/2}}.
\]
Plugging this into the Taylor approximation $P_1(t)$ in Eq.\ (\ref{eq:first-order}) we get that
\begin{align*}
    P_1(t) &= \sigma\sqrt{\frac{\gamma}{1 - \gamma}}  -\sigma t \sqrt{\frac{1-\gamma}{\gamma}} \frac{\gamma}{(1 - \gamma)^{3/2}}\\
    &= \sigma \sqrt{\frac{\gamma}{1 - \gamma}}\qty(1 - t\frac{1}{\sqrt{1 - \gamma}}).
\end{align*}
We now show that $\sqrt{\minexcess(t)} \geq P_1(t)$ by showing that $\sqrt{\minexcess(t)}$ is convex on $[0, 1]$ for which it suffices to show that the second derivative is non-negative. Computing the second derivative, we get
\[
    \dv[2]{t} \sqrt{\minexcess(t)} = \frac{\minexcess(t)}{2\sqrt{\minexcess(t)}} - \frac{\minexcess'(t)}{4\minexcess(t)^{3/2}}.
\]
We can see that this is always non-negative since $\minexcess(t) \geq 0$ and from Eq.\ (\ref{eq:error_derivative}) and the accompanying remarks $\minexcess'(t) \leq 0$ and $\minexcess''(t) \geq 0$. Returning to the original $\ofitlevel$ parameterization, we have that for all $\ofitlevel \in [0, 1]$
\[
    \sqrt{\minexcess(\ofitlevel)} \geq P_1(\sqrt{\ofitlevel})
    = \sigma \sqrt{\frac{\gamma}{1 - \gamma}}\qty(1 - \sqrt{\frac{\ofitlevel}{1 - \gamma}}).
\]
Note that for $\ofitlevel \leq 1 - \gamma$, both sides of the above inequality are non-negative. Hence we can square both sides
yielding
\[
    \minexcess(\ofitlevel, \gamma) \geq \sigma^2\frac{\gamma}{1 - \gamma}\qty(1 - \sqrt{\frac{\ofitlevel}{1 - \gamma}})^2
\]
which was the desired lower bound. The fact that the bound is tight up to $o(\sqrt{\ofitlevel})$ follows from Eq.\ (\ref{eq:remainder}).
\end{proof}

\begin{remark}\label{rmk:square-root-sharp}
Observe that from the above computations the first-order expansion of $\minexcess(t)$ is given by
\[
\minexcess(0) + \sqrt{\ofitlevel} \minexcess'(0) = \sigma^2 \sqrt{\frac{\gamma}{1 - \gamma}}\qty(1 - 2\sqrt{\frac{\ofitlevel}{1 - \gamma}}) < \sigma^2\frac{\gamma}{1 - \gamma}\qty(1 - \sqrt{\frac{\ofitlevel}{1 - \gamma}})^2 \leq \minexcess(\ofitlevel)
\]
which shows that expanding the square-root of the minimum excess error yields a tighter bound.
\end{remark}
Note that the bound in Eq.\ (\ref{eq:mp-taylor}) matches the lower bound in Theorem \ref{thm:mp-law-lower-bound} as $\gamma \to 0$. Interestingly, this bound has the form of a multiplicative factor of the minimum interpolation $(\ofitlevel = 0)$ loss and is only valid for small $\ofitlevel$. We will in fact show that no bound of that form can capture the behavior of the excess loss for arbitrarily small $\ofitlevel > 0$ as $\gamma \to 1$ in Theorem \ref{thm:peak}. In particular, we now show a discontinuity at $\ofitlevel = 0$ when considering the peak $n = p$.
\begin{theorem}[Excess Loss at Peak]\label{thm:peak}
For $\gamma = 1$ and $\ofitlevel \in (0, 1)$,
\[
\frac{\minexcess(\ofitlevel, 1)}{\sigma^2} = \frac{1}{4 \ofitlevel} + \frac{\ofitlevel}{4} - \frac{1}{2}
\]
hence as $\ofitlevel \to 0$, 
\[\minexcess(\ofitlevel, 1) = \Theta(\ofitlevel^{-1}).\]
\end{theorem}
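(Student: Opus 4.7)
The plan is to plug $\gamma = 1$ directly into the analytic characterization from Proposition \ref{prop:mp-asymp} and carry out the resulting calculation, which reduces to a one-variable manipulation once an explicit form of the Stieltjes transform is available.

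First I would write the Stieltjes transform of the Marchenko--Pastur law at $\gamma=1$ in closed form. The Stieltjes transform $m(z;\gamma)$ satisfies the standard quadratic $\gamma z\, m^2 + (z+\gamma-1)\,m + 1 = 0$; specializing to $\gamma=1$ and $z=-\lambda$ with $\lambda > 0$, choosing the root that is positive (since $m(-\lambda;1)=\int (s+\lambda)^{-1}\,d\mathsf{MP}_1(s)>0$), gives
\begin{equation*}
m(-\lambda;1) \;=\; \frac{\sqrt{1+4/\lambda}-1}{2}.
\end{equation*}
Differentiating in $z$ (equivalently, negating the derivative in $\lambda$) then yields
\begin{equation*}
m'(-\lambda;1) \;=\; \frac{1}{\lambda^2\sqrt{1+4/\lambda}}.
\end{equation*}

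Next I would solve the fixed-point equation $\lambda^2 m'(-\lambda;1) = \tau$ of Proposition \ref{prop:mp-asymp}. Substituting the formula above gives $1/\sqrt{1+4/\lambda} = \tau$, hence $\lambda = 4\tau^2/(1-\tau^2)$. With this $\lambda$ in hand, compute
\begin{equation*}
m(-\lambda;1) = \frac{1/\tau - 1}{2} = \frac{1-\tau}{2\tau}, \qquad \lambda\,m'(-\lambda;1) = \frac{\tau}{\lambda} = \frac{1-\tau^2}{4\tau}.
\end{equation*}
Then Proposition \ref{prop:mp-asymp} gives
\begin{equation*}
\frac{\minexcess(\ofitlevel,1)}{\sigma^2} = m(-\lambda;1) - \lambda\,m'(-\lambda;1) = \frac{2(1-\tau) - (1-\tau^2)}{4\tau} = \frac{(1-\tau)^2}{4\tau},
\end{equation*}
and expanding $(1-\tau)^2/(4\tau) = 1/(4\tau) + \tau/4 - 1/2$ yields the claimed identity. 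The scaling $\minexcess(\ofitlevel,1) = \Theta(\ofitlevel^{-1})$ as $\ofitlevel\to 0$ is then immediate from the $1/(4\tau)$ term dominating.

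There is no real obstacle here beyond bookkeeping: once the explicit quadratic formula for $m(-\lambda;1)$ is written down, the fixed-point equation becomes linear in $\sqrt{1+4/\lambda}$ and everything else is algebraic simplification. The only point that warrants care is sign and branch selection for the square root (choosing the root that makes $m$ positive and that has the correct pole $m(-\lambda;1) \sim \lambda^{-1/2}$ as $\lambda\to 0^+$, reflecting the $\gamma=1$ interpolation-peak divergence). With this check the derivation is elementary.
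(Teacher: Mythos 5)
Your proposal is correct and follows essentially the same route as the paper: specialize the analytic characterization of Proposition \ref{prop:mp-asymp} to $\gamma=1$ using the explicit Stieltjes transform $m(-\lambda;1)=\tfrac{1}{2}\bigl(\sqrt{1+4/\lambda}-1\bigr)$, solve the fixed-point equation $\lambda^2 m'(-\lambda;1)=\tau$ (which linearizes in $\sqrt{1+4/\lambda}$), and simplify; your closed form $(1-\tau)^2/(4\tau)$ matches the paper's expression $\tfrac{1}{4\tau}+\tfrac{\tau}{4}-\tfrac{1}{2}$ exactly.
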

\begin{proof}
Consider the Stieltjes transform $m(-z; \gamma)$ and its derivative. At the interpolation peak $\gamma = 1$, we have
\begin{align*}
    m(-z; 1) &= \frac{1}{2}\qty(\sqrt{1 + 4/z} - 1)\\
    m'(-z; 1) &= -\frac{z + 2}{2z \sqrt{z^2 + 4z}} + \frac{\sqrt{z^2 + 4z}}{2z^2}.
\end{align*}
One can then check that
\[
z^2 m'(-z; 1) = \frac{1}{\sqrt{1 + 4/z}}.
\]
Hence the fixed point equation Eq.\ (\ref{eq:asymp_fixed_point}) can be written as
\[
\ofitlevel = \ridgeparam^2 m'(-\ridgeparam; 1) = \frac{1}{\sqrt{1 + 4/\ridgeparam}}
\]
which implies that $\ridgeparam$ satisfies the following
\begin{align*}
    \sqrt{1 + 4 / \ridgeparam} &= \frac{1}{\ofitlevel},\\
    \frac{1}{\ridgeparam} &= \frac{1}{4}\qty(\frac{1}{\ofitlevel^2} - 1).
\end{align*}
Using the above relations, we can compute the minimum excess error using Eq.\ (\ref{eq:min_excess_loss_analytic})
\begin{align*}
    \frac{\minexcess(\ofitlevel, 1)}{\sigma^2} &= m(-\ridgeparam; 1) - \ridgeparam m'(-\ridgeparam; 1)\\
    &= \frac{1}{2}\qty(\sqrt{1 + 4/\ridgeparam} - 1) - \frac{\ofitlevel}{\ridgeparam}\\
    &= \frac{1}{2}\qty(\frac{1}{\ofitlevel} - 1)- \frac{1}{4}\qty(\frac{1}{\ofitlevel} - \tau) \\
    &= \frac{1}{4 \ofitlevel} + \frac{\ofitlevel}{4} - \frac{1}{2}.
\end{align*}
\end{proof}
\begin{remark}
As mentioned earlier, a consequence of the above is that we cannot have a lower bound of the form $\minexcess(\ofitlevel; n, p) \geq \minexcess(0; n, p) \cdot f(\ofitlevel)$ for some function $f$ which satisfies $f(\ofitlevel) > 0$ for $\ofitlevel > 0$ since for any $\ofitlevel > 0$,
\[
\lim_{\gamma \to 1} \frac{\minexcess(\ofitlevel, \gamma)}{\minexcess(0, \gamma)} = 0.
\]
\end{remark}

\section{Conclusion}\label{sec:conclusions} 

In this work we demonstrated a trade-off between the expected loss, empirical loss, and the number of parameters for general linear models. 
In particular we have shown that near-optimal algorithms output models that are either classical (with empirical loss approaching the noise level) or have significant excess over-parameterization, i.e., have many more parameters than the number needed to fit the training data. This trade-off is universal as it is non-asymptotic,  holds for any algorithm, and any data distribution (under mild non-degeneracy assumptions).

We also provided a more precise asymptotic lower bound under Marchenko-Pastur distributional assumptions \revised{near the classical double descent peak where the amount of overparametrization is just enough to interpolate the data}. Remarkably \revised{however, as the level of overparametrization increases the minimum excess loss exactly matches the universal bound, demonstrating the tightness of the bound}.

The open questions that remain include \remove{resolving the discrepancy between the universal lower bound and the lower bound under Marchenko-Pastur asymptotics in terms of $\tau$, and more importantly,} extending our results to more general non-linear parametric families and to classification settings.
\section*{Acknowledgements}
\revised{The authors would like thank Amirhesam Abedsoltan for finding an error in a previous version of the proof of Theorem \ref{thm:general_lower_bound}. Correcting the proof led to an improved lower bound which is now  tight. We also thank the anonymous reviewers for insightful comments.} We are grateful for support from the National Science Foundation (NSF) and the Simons Foundation for the Collaboration on the Theoretical Foundations of Deep Learning\footnote{\url{https://deepfoundations.ai/}} through awards DMS-2031883 and \#814639 as well
as NSF IIS-1815697 and the TILOS institute (NSF CCF-2112665). NG would also like to acknowledge support from the NSF RTG Grant \#1745640.

\newpage
\bibliographystyle{unsrt}
\bibliography{refs}

\newpage
\appendix
\onecolumn
\section{Missing Proofs from Section \ref{sec:main_proof}}\label{app:excess_lin_loss}
In this section of the appendix, we supply missing proofs for some the auxiliary results used in the proof of Theorem \ref{thm:general_lower_bound} in Section \ref{sec:main_proof}. Recall that we denote the optimal linear predictor as $\beta_\star := \argmin_{\bbeta \in \R^p} R(\beta)$.
\begin{lemma}[Optimal Linear Predictor]\label{lem:opt_lin_app}
Define $\bbeta_\star$ as in Eq.\ (\ref{eq:opt_lin_predictor}). Then 
\begin{enumerate}
    \item $\beta_\star$ is the orthogonal projection of $f_\star$ onto the subspace of linear functions $\cH = \{\beta(\bx) : \bbeta \in \R^p\}$ in $L^2$,\label{claim:opt_lin_proj}
    
    \item $\excess(f) := R(f) - R(f_\star) = \E_{\bx}[(f(\bx) - f_\star(\bx))^2$,\label{claim:excess}
    
    \item $\excess^{\lin}(\beta) := R(\beta) - R(\beta_\star) = \E_{\bx}[(\beta(\bx) - \beta_\star(\bx))^2]$.\label{claim:excess_lin}
\end{enumerate}
\end{lemma}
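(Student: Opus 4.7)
The plan is to prove the three claims in order, and the main engine is the bias-variance style orthogonality argument applied twice: first with respect to the conditional mean $f_\star$, and then with respect to the best linear approximation $\beta_\star$.

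First I would prove Claim~\ref{claim:excess}. The key identity is
\[
R(f) = \E_{(\bx,y)}[(y - f_\star(\bx)) + (f_\star(\bx) - f(\bx))]^2 = R(f_\star) + \E_{\bx}[(f_\star(\bx) - f(\bx))^2] + 2\cdot\text{(cross)},
\]
and the cross term $\E[(y - f_\star(\bx))(f_\star(\bx) - f(\bx))]$ vanishes by conditioning on $\bx$ and using $\E[y-f_\star(\bx)\mid \bx]=0$, which is immediate from the definition $f_\star(\bx)=\E(y\mid\bx)$. This gives $R(f) - R(f_\star) = \E_{\bx}[(f(\bx)-f_\star(\bx))^2]$, i.e.\ Claim~\ref{claim:excess}.

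Next, for Claim~\ref{claim:opt_lin_proj}, I specialize the identity from Claim~\ref{claim:excess} to $f = \beta \in \mathcal{H}$:
\[
R(\beta) = R(f_\star) + \E_{\bx}[(\beta(\bx) - f_\star(\bx))^2].
\]
Since $R(f_\star)$ does not depend on $\bbeta$, minimizing $R(\beta)$ over $\bbeta \in \R^p$ is exactly equivalent to minimizing $\E_{\bx}[(\beta(\bx) - f_\star(\bx))^2]$, the squared $L^2$ distance from $f_\star$ to $\mathcal{H}$. By standard Hilbert-space theory (orthogonal projection onto a closed subspace, where closedness follows from finite dimensionality of $\mathcal{H}$), the minimizer is the $L^2$-orthogonal projection of $f_\star$ onto $\mathcal{H}$, which is precisely $\beta_\star$.

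Finally, for Claim~\ref{claim:excess_lin}, I would repeat the bias-variance split but anchored at $\beta_\star$ instead of $f_\star$:
\[
R(\beta) = \E[(y-\beta_\star(\bx))^2] + \E_{\bx}[(\beta_\star(\bx)-\beta(\bx))^2] + 2\,\E[(y-\beta_\star(\bx))(\beta_\star(\bx)-\beta(\bx))].
\]
The cross term vanishes because, by the projection characterization from Claim~\ref{claim:opt_lin_proj}, $f_\star - \beta_\star \perp \mathcal{H}$ in $L^2(P_X)$, so $\E_{\bx}[(f_\star(\bx)-\beta_\star(\bx))\,h(\bx)] = 0$ for every $h \in \mathcal{H}$; taking $h = \beta_\star - \beta \in \mathcal{H}$ and then using $\E[y-f_\star(\bx)\mid\bx]=0$ to replace $y$ by $f_\star(\bx)$ inside the expectation kills the cross term. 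This yields $R(\beta)-R(\beta_\star) = \E_{\bx}[(\beta(\bx)-\beta_\star(\bx))^2]$.

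There is no real obstacle here; the only mild care needed is in justifying the orthogonality characterization of $\beta_\star$ (i.e.\ that the projection is well defined), which is standard once we note $\mathcal{H}$ is a finite-dimensional, hence closed, subspace of $L^2(P_X)$, and that $f_\star \in L^2(P_X)$ follows from the integrability assumption $\E y^2 < \infty$.
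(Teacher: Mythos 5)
Your proposal is correct and follows essentially the same route as the paper: the bias--variance decomposition with the cross term killed by $\E[y - f_\star(\bx)\mid\bx]=0$ for Claims 2 and 1, and the orthogonality $f_\star - \beta_\star \perp \cH$ from the projection characterization for Claim 3. The only cosmetic difference is that you expand $R(\beta)$ directly around $\beta_\star$ in Claim 3, whereas the paper writes $R(\beta)-R(\beta_\star)=\excess(\beta)-\excess(\beta_\star)$ and reuses Claim 2; these are equivalent rearrangements of the same identities.
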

\begin{proof}
By definition of the optimal linear predictor
\begin{align*}
    \beta_\star &= \argmin_{\beta \in \cH} R(\beta)\\
    &= \argmin_{\beta \in \cH} \E_{(\bx, y)}[(y - \beta(\bx))^2]\\
     &= \argmin_{\beta \in \cH} \E_{(\bx, y)}[(y - f_\star(\bx))^2 + (f_\star(\bx) - \beta(\bx))^2] && (\text{since } \E_{y \mid \bx}[y - f_\star(\bx)] = 0)\\
     &= \argmin_{\beta \in \cH} \E_{\bx}[(f_\star(\bx) - \beta(\bx))^2],
\end{align*}
which shows Claim \ref{claim:opt_lin_proj}. For Claim \ref{claim:excess}
\begin{align*}
    R(f) - R(f_\star) &= \E_{(\bx, y)}[(y - f(\bx))^2] - \E_{(\bx, y)}[(y - f_\star(\bx))^2]\\
    &= \E_{(\bx, y)}[(f(\bx) - f_\star(\bx))^2] - 2\E_{(\bx, y)}[(f(\bx) - f_\star(\bx))(y - f_\star(\bx))]\\
    &= \E_{\bx}[(f(\bx) - f_\star(\bx))^2].
\end{align*}
For Claim \ref{claim:excess_lin}
\begin{align*}
    R(\beta) - R(\beta_\star) &= \excess(\beta) - \excess(\beta_\star)\\
    &=\E_{\bx} [(f_\star(\bx) - \beta(\bx))^2 - (f_\star(\bx) - \beta_{\star}(\bx))^2]\\
    &= \E_{\bx} [(\beta(\bx) - \beta_\star(\bx))^2] - 2\E_{\bx}[(f_\star(\bx) - \beta_{\star}(\bx))(\beta(\bx) - \beta_\star(\bx))]\\
    &= \E_{\bx} [(\beta(\bx) - \beta_\star(\bx))^2]
\end{align*}
where the second equality holds from Claim \ref{claim:excess} and the last equality holds from Claim \ref{claim:opt_lin_proj} in Lemma \ref{lem:opt_lin_app}.
\end{proof}
\begin{lemma}[Excess Linear Loss]
The excess loss satisfies the following lower bound
\[
\excess(\beta) \geq \excess^{\lin}(\beta) = \norm{\bSigma^{1/2}(\bbeta - \bbeta_\star)}^2.
\]
\end{lemma}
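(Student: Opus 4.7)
The proof is essentially a two-step argument that mostly reassembles already-established facts, so the plan is straightforward and I do not expect a serious obstacle.

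My first step is to reduce the inequality $\excess(\beta) \geq \excess^{\lin}(\beta)$ to the Pythagorean theorem in $L^2$. By Claim~2 of the preceding lemma, $\excess(\beta) = \E_\bx[(\beta(\bx) - f_\star(\bx))^2] = \|\beta - f_\star\|_{L^2}^2$. Now I would decompose $\beta - f_\star = (\beta - \beta_\star) + (\beta_\star - f_\star)$. Since $\beta - \beta_\star$ lies in the linear subspace $\cH$ and, by Claim~1 of the preceding lemma, $\beta_\star - f_\star$ is the residual of the $L^2$ orthogonal projection of $f_\star$ onto $\cH$, these two summands are orthogonal in $L^2$. The Pythagorean theorem then gives
\[
\|\beta - f_\star\|_{L^2}^2 = \|\beta - \beta_\star\|_{L^2}^2 + \|\beta_\star - f_\star\|_{L^2}^2 \geq \|\beta - \beta_\star\|_{L^2}^2 = \excess^{\lin}(\beta),
\]
where the last equality is Claim~3 of the preceding lemma.

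For the second step, I would identify $\excess^{\lin}(\beta)$ with the stated quadratic form by direct computation. Expanding the linear predictors via $\beta(\bx) = \bbeta^\sT \phi_p(\bx)$ and $\beta_\star(\bx) = \bbeta_\star^\sT \phi_p(\bx)$, we obtain
\[
\excess^{\lin}(\beta) = \E_\bx\bigl[((\bbeta - \bbeta_\star)^\sT \phi_p(\bx))^2\bigr] = (\bbeta - \bbeta_\star)^\sT \,\E_\bx[\phi_p(\bx)\phi_p(\bx)^\sT]\, (\bbeta - \bbeta_\star) = (\bbeta - \bbeta_\star)^\sT \bSigma (\bbeta - \bbeta_\star),
\]
by the definition of $\bSigma$. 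Rewriting this as $\|\bSigma^{1/2}(\bbeta - \bbeta_\star)\|^2$ completes the proof.

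The only mild subtlety is verifying that $\beta_\star - f_\star$ really is orthogonal to every element of $\cH$ in $L^2(P_X)$, but this is exactly the content of being the orthogonal projection guaranteed by Claim~1 of the previous lemma; no new computation is needed. Everything else is bookkeeping, so no step should present genuine difficulty.
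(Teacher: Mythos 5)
Your proposal is correct, and its second half — expanding $\excess^{\lin}(\beta) = \E_{\bx}[((\bbeta-\bbeta_\star)^\sT\phi_p(\bx))^2] = (\bbeta-\bbeta_\star)^\sT\bSigma(\bbeta-\bbeta_\star)$ — is exactly the paper's computation. The only divergence is the inequality $\excess(\beta)\ge\excess^{\lin}(\beta)$: the paper dismisses it as immediate, since $f_\star$ minimizes $R$ over all functions so $R(f_\star)\le R(\beta_\star)$ and hence $\excess(\beta)-\excess^{\lin}(\beta)=R(\beta_\star)-R(f_\star)\ge 0$, whereas you route it through the $L^2$ Pythagorean theorem using the projection characterization of $\beta_\star$. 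Your argument is slightly longer but buys a bit more: it gives the exact decomposition $\excess(\beta)=\excess^{\lin}(\beta)+\E_{\bx}[(f_\star(\bx)-\beta_\star(\bx))^2]$, not just the inequality; the orthogonality you invoke is legitimate since $\cH$ is a finite-dimensional (hence closed) subspace and Claim~1 of the preceding lemma identifies $\beta_\star$ as the $L^2$ projection of $f_\star$ onto it.
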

\begin{proof}
It is clear that $\excess(\beta) \geq \excess^{\lin}(\beta)$. Using Lemma \ref{lem:opt_lin_app} and the definition of $\bSigma$
\begin{align*}
\excess(\beta) &= \E_{\bx}(\beta(\bx) - \beta_{\star}(\bx))^2 \\
&=\E_{\bx}(\bbeta^\sT \phi(\bx) - \bbeta_{\star}^\sT \phi(\bx))^2\\
&= (\bbeta - \bbeta_\star)^\sT \E_{\bx}[\phi(\bx)\phi(\bx)^\sT](\bbeta - \bbeta_\star)\\
&= \norm{\bSigma^{1/2}(\bbeta - \bbeta_\star)}^2.
\end{align*}
\end{proof}
\noindent Recall that we defined the random vectors
\[
\noise = (y_i - f_\star(\bx_i))_{i \in [n]} \in \R^n,~~~~ \linnoise = (f_\star(\bx_i) - \beta_\star(\bx_i))_{i \in [n]} \in \R^n
\]
and let $\effnoise = \noise + \linnoise$.

\begin{lemma}[Expectation Over Noise]
Let $f : \R^{n \times d} \to \S^n_+$ be any PSD matrix valued function. Then,
\begin{equation}\label{eq:expected_noise_lb}
    \E_{\ytrain, \Xtrain} [\effnoise^\sT f(\Xtrain) \effnoise] \geq \sigma^2 \E_{\Xtrain} \Tr(f(\Xtrain)).
\end{equation}
Moreover, if $\Var(y \midsmall \bx) = \sigma^2$ almost surely and $f_\star = \beta_\star$ then the above is an equality.
\end{lemma}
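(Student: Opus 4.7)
The plan is to condition on $\Xtrain$ and exploit the fact that, given $\Xtrain$, the vector $\linnoise = (f_\star(\bx_i) - \beta_\star(\bx_i))_{i \in [n]}$ is deterministic while $\noise = (y_i - f_\star(\bx_i))_{i \in [n]}$ has conditionally independent coordinates with $\E[\noise_i \mid \Xtrain] = 0$ and $\E[\noise_i^2 \mid \Xtrain] = \Var(y_i \mid \bx_i) \geq \sigma^2$ by Assumption \ref{ass:noise}.

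First I would write $\effnoise = \noise + \linnoise$ and expand the quadratic form as
\[
\effnoise^\sT f(\Xtrain) \effnoise = \noise^\sT f(\Xtrain) \noise + 2\, \linnoise^\sT f(\Xtrain) \noise + \linnoise^\sT f(\Xtrain) \linnoise.
\]
Taking the conditional expectation $\E[\,\cdot\, \mid \Xtrain]$, the cross term vanishes since $\linnoise$ and $f(\Xtrain)$ are $\Xtrain$-measurable and $\E[\noise \mid \Xtrain] = \bzero$. The third term is nonnegative because $f(\Xtrain) \in \S^n_+$. For the first term, using conditional independence of the coordinates of $\noise$ given $\Xtrain$,
\[
\E[\noise^\sT f(\Xtrain) \noise \mid \Xtrain] = \sum_{i=1}^n f(\Xtrain)_{ii}\, \E[\noise_i^2 \mid \bx_i] \geq \sigma^2 \Tr(f(\Xtrain)).
\]
Taking an outer expectation over $\Xtrain$ then yields Eq.\ (\ref{eq:expected_noise_lb}).

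For the equality claim, if $f_\star = \beta_\star$ then $\linnoise = \bzero$ almost surely, so the cross and third terms vanish exactly, and if $\Var(y \mid \bx) = \sigma^2$ almost surely then $\E[\noise_i^2 \mid \bx_i] = \sigma^2$ so the inequality above becomes an equality. No step looks like a serious obstacle here; the only thing to be careful about is justifying that the coordinates of $\noise$ are conditionally independent with the claimed variance, which is immediate from the i.i.d.\ sampling $(\bx_i, y_i) \simiid P$ combined with $f_\star(\bx) = \E[y \mid \bx]$ and Assumption \ref{ass:noise}.
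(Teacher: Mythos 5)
Your proposal is correct and follows essentially the same route as the paper's own proof: expand $\effnoise = \noise + \linnoise$, condition on $\Xtrain$ so the cross term vanishes and the $\linnoise$ quadratic term is dropped by positive semidefiniteness, and bound the $\noise$ term via the diagonal conditional covariance $\succeq \sigma^2 \id$ (your coordinatewise sum, using $f(\Xtrain)_{ii} \geq 0$, is just the entrywise form of the paper's trace inequality). The equality case is handled identically.
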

\begin{proof}
Recalling that $\effnoise = \noise + \linnoise$. By definition of $\noise$ we have that
\[
\E_{\ytrain \mid \Xtrain}[\noise] = \bzero
\]
and by the assumption on the noise variance we have almost surely
\begin{equation}\label{eq:noise_covariance}
    \E_{\ytrain \mid \Xtrain}[\noise \noise^\sT] = \diag((\Var[y \mid \bx_i])_{i \in [n]}) \succeq \sigma^2 \id.
\end{equation}
Using these observations we have 
\begin{align*}
    \E_{\ytrain \mid \Xtrain} [\effnoise^\sT f(\Xtrain) \effnoise] &= \E_{\ytrain \mid \Xtrain} [\noise^\sT f(\Xtrain) \noise] + 2\E_{\ytrain \mid \Xtrain} [\linnoise^\sT f(\Xtrain) \noise] + \E_{\ytrain \mid \Xtrain} [\linnoise^\sT f(\Xtrain) \linnoise]\\
    &= \Tr(f(\Xtrain) \E_{\ytrain \mid \Xtrain} [\noise \noise^\sT]) + 2 \linnoise^\sT f(\Xtrain) \E_{\ytrain \mid \Xtrain}[\noise] + \linnoise^\sT f(\Xtrain) \linnoise\\
    &= \Tr(f(\Xtrain) \E_{\ytrain \mid \Xtrain} [\noise \noise^\sT]) + \linnoise^\sT f(\Xtrain) \linnoise \\
    &\geq \sigma^2 \Tr(f(\Xtrain)) + \linnoise^\sT f(\Xtrain) \linnoise \\
    &\geq \sigma^2 \Tr(f(\Xtrain)),
\end{align*}
where we used Eq.  (\ref{eq:noise_covariance}) in the first inequality and we used the assumption that $f(\bX)$ is a PSD matrix for the last inequality.
By iterating expectations we have
\[
\E_{\ytrain,\Xtrain} [\effnoise^\sT f(\Xtrain) \effnoise] = \E_{\Xtrain} \E_{\ytrain \mid \Xtrain} [\effnoise^\sT f(\Xtrain) \effnoise] \geq \sigma^2 \E_{\Xtrain} \Tr(f(\Xtrain))
\]
as desired. Note that if $\Var(y \midsmall \bx) = \sigma^2$ then Eq.  (\ref{eq:noise_covariance}) becomes an equality and if $f_\star = \beta_\star$ then $\linnoise = \bzero$ and so it easy to see that Eq.  (\ref{eq:expected_noise_lb}) is an equality as well.
\end{proof}
\newpage
\section{Marchenko-Pastur Law}\label{sec:app-mp}
In this section, we will give some definitions, facts, and results concerning the Marchenko-Pastur Law.
\begin{definition}[Marchenko-Pastur Law]
    The Marchenko-Pastur Law with aspect ratio $\gamma$, denoted $\MP(\gamma)$, is given the probability distribution
    \[
        \qty(1 - \frac{1}{\gamma})_+ \delta_0 + \frac{1}{\gamma 2 \pi x} \sqrt{(b - x)(x - a)} 
    \]
    where
    \[
    a = (1 - \sqrt{\gamma})\revised{^2},~~~~ b= (1 + \sqrt{\gamma})\revised{^2}.
    \]
\end{definition}

\begin{definition}[Stieltjes Transform]
For $z > 0$ and $\gamma > 0$, the Stieltjes transform of the Marchenko-Pastur law is
\[
m(-z; \gamma) := \int \frac{1}{s + z} \dd{\MP_\gamma(s)} = \frac{-(1 - \gamma + z) + \sqrt{(1 - \gamma + z)^2 + 4 \gamma z}}{2 \gamma z}
\]
and its derivative satisfies
\begin{align*}
    m'(-z; \gamma) &= \int \frac{1}{(s + z)^2} \dd{\MP_\gamma(s)}\\
     &= \frac{-(1-\gamma+z) - 2\gamma}{2 \gamma z \sqrt{(1 - \gamma + z)^2 + 4 \gamma z}} + \frac{1}{2 \gamma z} + \frac{\sqrt{(1 - \gamma + z)^2 + 4 \gamma z}-(1 - \gamma + z)}{2z^2 \gamma}.
\end{align*}
\end{definition}

\begin{lemma}\label{lem:stieltjes_at_zero}
For any $\gamma > 0$, 
\[
\lim_{z \to 0^+} m(-z; \gamma) = \frac{1}{1 - \gamma}
\]
\end{lemma}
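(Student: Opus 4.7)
The plan is to prove this directly from the explicit closed-form expression for $m(-z;\gamma)$ given in the previous definition, by a careful first-order expansion as $z \to 0^+$. Write
\[
m(-z;\gamma) \;=\; \frac{-(1-\gamma+z) + \sqrt{(1-\gamma+z)^2 + 4\gamma z}}{2\gamma z}.
\]
For $\gamma < 1$ (which is the regime in which the claimed limit $1/(1-\gamma)$ is finite and the lemma is used in Theorem \ref{thm:mp-law-local}), both the numerator and denominator vanish at $z=0$, so a $0/0$ resolution is required.

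First I would set $u := 1-\gamma > 0$ and rewrite the radicand as $u^2 + 2z(u + 2\gamma) + z^2$. Expanding the square root via $\sqrt{1+\epsilon} = 1 + \epsilon/2 + O(\epsilon^2)$ gives
\[
\sqrt{(u+z)^2 + 4\gamma z} \;=\; u + \frac{z(u+2\gamma)}{u} + O(z^2),
\]
where positivity of $u$ ensures we take the correct sign of the square root. Substituting into the numerator, the $u$ cancels, and the remaining linear-in-$z$ contribution simplifies to $z \cdot 2\gamma / u + O(z^2)$. Dividing by $2\gamma z$ and sending $z \to 0^+$ yields $1/u = 1/(1-\gamma)$.

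Alternatively, and perhaps more cleanly, one can square the identity $2\gamma z m + (1-\gamma+z) = \sqrt{(1-\gamma+z)^2 + 4\gamma z}$ to obtain the algebraic relation
\[
\gamma z\, m(-z;\gamma)^2 + (1-\gamma+z)\, m(-z;\gamma) - 1 \;=\; 0.
\]
The explicit formula makes clear that $m(-z;\gamma)$ is continuous on $(0,\infty)$ and bounded as $z \to 0^+$ when $\gamma<1$, so passing to the limit in this quadratic identity gives $(1-\gamma)\, m_0 - 1 = 0$, i.e., $m_0 = 1/(1-\gamma)$.

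There is no serious obstacle; the only issue to watch is the sign of the square root, which is unambiguous because $\gamma < 1$ forces $1-\gamma > 0$. (For $\gamma \ge 1$ the cancellation fails, which is consistent with the point mass of $\MP(\gamma)$ at $0$ in that regime and the consequent divergence of $m(-z;\gamma)$ as $z \to 0^+$.)
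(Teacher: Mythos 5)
Your first argument is essentially the paper's: both resolve the $0/0$ form of the explicit expression directly, the paper by multiplying through by the conjugate $(1-\gamma+z)+\sqrt{(1-\gamma+z)^2+4\gamma z}$, you by a first-order expansion of the square root; these are interchangeable and your expansion is carried out correctly. Your second route, passing to the limit in the quadratic self-consistency identity $\gamma z\,m^2+(1-\gamma+z)\,m-1=0$, is genuinely different and scales better (it is the standard way such limits are computed when no closed form is available), but as written it leans on an asserted boundedness of $m(-z;\gamma)$ as $z\to 0^+$; that is easiest to justify not "from the explicit formula" but from the integral representation, since for $\gamma<1$ the support of $\MP(\gamma)$ is $[(1-\sqrt{\gamma})^2,(1+\sqrt{\gamma})^2]$, bounded away from $0$, giving $m(-z;\gamma)\le (1-\sqrt{\gamma})^{-2}$. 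Your caveat about $\gamma\ge 1$ is also well taken: the limit then diverges (consistent with the point mass at $0$), so the lemma as stated "for any $\gamma>0$" really only holds for $\gamma<1$ -- the regime in which it is invoked (Theorem \ref{thm:mp-law-local}) -- a restriction the paper's own proof silently passes over in its final step.
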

\begin{proof}
This just follows from a direct computation
\begin{align*}
    \lim_{z \to 0^+} m(-z; \gamma) &= \lim_{z \to 0^+} \frac{-(1 - \gamma + z) + \sqrt{(1 - \gamma + z)^2 + 4 \gamma z}}{2 \gamma z} \\
    &= \lim_{z \to 0^+} \frac{-(1 - \gamma + z)^2 + (1 - \gamma + z)^2 + 4 \gamma z}{2 \gamma z [(1 - \gamma + z) + \sqrt{(1 - \gamma + z)^2 + 4 \gamma z}]} \\
    &= \lim_{z \to 0^+} \frac{2}{(1 - \gamma + z) + \sqrt{(1 - \gamma + z)^2 + 4 \gamma z}} = \frac{1}{1 - \gamma}.
\end{align*}
\end{proof}

\begin{lemma}\label{lem:stieltjes_prime_at_zero}
For any $\gamma > 0$, 
\[
\lim_{z \to 0^+} m'(-z; \gamma) = \frac{1}{(1 - \gamma)^3}
\]
\end{lemma}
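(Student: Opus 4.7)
The plan is to avoid wrangling the three-term explicit expression for $m'(-z;\gamma)$ directly, and instead exploit the polynomial identity satisfied by $m(-z;\gamma)$. Set $w(z) := m(-z;\gamma)$. From the closed form given in the previous definition, one can verify (by clearing the square root) that $w(z)$ is the positive root of the quadratic
\[
\gamma z\, w(z)^2 + (1 - \gamma + z)\, w(z) - 1 = 0,
\]
valid for all $z > 0$. This is simply the Marchenko--Pastur self-consistency equation rewritten in terms of $z$ and will be the workhorse.

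Next I would differentiate this identity in $z$, treating $w$ as a $C^1$ function of $z$ on $(0,\infty)$. This yields
\[
\gamma w(z)^2 + 2\gamma z\, w(z) w'(z) + w(z) + (1-\gamma+z)\, w'(z) = 0,
\]
so that
\[
w'(z) \;=\; -\,\frac{w(z)\bigl(\gamma w(z) + 1\bigr)}{2\gamma z\, w(z) + (1-\gamma+z)}.
\]
By Lemma~\ref{lem:stieltjes_at_zero}, $w(z) \to \frac{1}{1-\gamma}$ as $z \to 0^+$. Plugging this into the displayed expression, the numerator tends to $-\frac{1}{1-\gamma}\bigl(\frac{\gamma}{1-\gamma} + 1\bigr) = -\frac{1}{(1-\gamma)^2}$ and the denominator tends to $1-\gamma$. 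Hence $w'(z) \to -\frac{1}{(1-\gamma)^3}$ as $z \to 0^+$.

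Finally, by the chain rule $w'(z) = \frac{d}{dz} m(-z;\gamma) = -m'(-z;\gamma)$, so
\[
\lim_{z \to 0^+} m'(-z;\gamma) \;=\; -\lim_{z \to 0^+} w'(z) \;=\; \frac{1}{(1-\gamma)^3},
\]
as claimed. The only potentially subtle step is justifying differentiability of $w$ at $z > 0$ (to legitimately differentiate the quadratic), but this follows either from the closed form for $m(-z;\gamma)$, which is manifestly real-analytic for $z > 0$ in the regime where the discriminant is positive, or alternatively from the general fact that Stieltjes transforms are analytic off the support of the measure. A more pedestrian backup route, should one prefer to avoid the implicit equation entirely, is to Taylor expand the square-root $\sqrt{(1-\gamma+z)^2 + 4\gamma z} = \sqrt{(1-\gamma)^2 + 2(1+\gamma)z + z^2}$ around $z = 0$ to second order, read off the expansion $m(-z;\gamma) = \frac{1}{1-\gamma} - \frac{z}{(1-\gamma)^3} + O(z^2)$, and differentiate; this reaches the same conclusion but with heavier bookkeeping.
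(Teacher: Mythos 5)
Your proof is correct, but it takes a genuinely different route from the paper. The paper works directly with the explicit closed form: it writes $\lim_{z\to 0^+} m'(-z;\gamma)$ as the limit of the difference quotient $\frac{m(-z;\gamma)-m(0;\gamma)}{-z}$, rationalizes, and simplifies the numerator to $4z^2\gamma$ before passing to the limit. You instead observe that $w(z)=m(-z;\gamma)$ satisfies the Marchenko--Pastur self-consistency quadratic $\gamma z\,w^2+(1-\gamma+z)w-1=0$ (which indeed follows from the quadratic formula applied to the closed form), differentiate it implicitly to get $w'=-\,w(\gamma w+1)/\bigl(2\gamma z w+1-\gamma+z\bigr)$, and then feed in Lemma~\ref{lem:stieltjes_at_zero} together with the sign flip $m'(-z;\gamma)=-w'(z)$. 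Your computation checks out ($w\to\frac{1}{1-\gamma}$ gives numerator $-\frac{1}{(1-\gamma)^2}$, denominator $1-\gamma$, hence $m'(-z;\gamma)\to\frac{1}{(1-\gamma)^3}$), and the differentiability of $w$ on $(0,\infty)$ is immediate since the discriminant $(1-\gamma+z)^2+4\gamma z\geq 4\gamma z>0$. What your route buys: it avoids the heavier algebra of the paper's rationalization, it leans on Lemma~\ref{lem:stieltjes_at_zero} rather than recomputing from scratch, and it computes the limit of the derivative itself rather than identifying it with the derivative at $0$ via a difference quotient, which is the slightly loose step in the paper's version (harmless there, since the limit of $m'$ exists, but your argument does not need that identification). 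One shared caveat, not a gap in your argument relative to the paper: the conclusion (like Lemma~\ref{lem:stieltjes_at_zero} on which both proofs rest) is really only meaningful for $\gamma\in(0,1)$; for $\gamma>1$ the atom at the origin makes both limits $+\infty$, and indeed the stated value $\frac{1}{(1-\gamma)^3}$ would be negative. This does not affect the use of Lemma~\ref{lem:stieltjes_prime_at_zero} in Theorem~\ref{thm:mp-law-local}, where $\gamma\in(0,1)$.
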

\begin{proof}
By the definition of derivative
\begin{align*}
    \lim_{z \to 0^+} m'(-z; \gamma) &= \lim_{z \to 0^+} \frac{m(-z; \gamma) - m(0; \gamma)}{-z} \\
    &= \lim_{z \to 0^+} \frac{1}{z}\qty(\frac{(1 - \gamma + z) - \sqrt{(1 - \gamma + z)^2 + 4 \gamma z}}{2 \gamma z} + \frac{1}{1 - \gamma}) \\ 
    &= \lim_{z \to 0^+} \frac{(1 - \gamma)(1 - \gamma + z) - (1 - \gamma)\sqrt{(1 - \gamma + z)^2 + 4 \gamma z} + 2 \gamma z}{2 \gamma(1-\gamma)z^2} \\
    &= \lim_{z \to 0^+} \frac{(1 - \gamma)^2 + z(1 + \gamma) - (1 - \gamma)\sqrt{(1 - \gamma + z)^2 + 4 \gamma z}}{2 \gamma(1-\gamma)z^2} \\
    &= \lim_{z \to 0^+} \frac{[(1 - \gamma)^2 + z(1 + \gamma)]^2 - (1 - \gamma)^2[(1 - \gamma + z)^2 + 4 \gamma z]}{2 \gamma(1-\gamma)z^2[(1 - \gamma)^2 + z(1 + \gamma) +(1 - \gamma)\sqrt{(1 - \gamma + z)^2 + 4 \gamma z} ]}
\end{align*}
Observe that the \revised{numerator} simplifies as
\begin{align*}
    &[(1 - \gamma)^2 + z(1 + \gamma)]^2 - (1 - \gamma)^2[(1 - \gamma + z)^2 + 4 \gamma z]\\ &= (1 - \gamma)^4 + 2z(1-\gamma)^2(1 + \gamma) + z^2(1 + \gamma)^2-[(1 - \gamma)^4 + 2z(1 - \gamma)^2(1+\gamma) + (1 - \gamma)^2 z^2]\\
    &= z^2[(1 + \gamma)^2 - (1 - \gamma)^2] = 4 z^2 \gamma.
\end{align*}
Hence
\begin{align*}
    \lim_{z \to 0^+} m'(-z; \gamma) &= \lim_{z \to 0^+} \frac{4 \gamma}{2 \gamma(1-\gamma)[(1 - \gamma)^2 + z(1 + \gamma) +(1 - \gamma)\sqrt{(1 - \gamma + z)^2 + 4 \gamma z} ]}\\
     &= \frac{2}{(1-\gamma)[(1 - \gamma)^2 +(1 - \gamma)\sqrt{(1 - \gamma )^2 } ]} = \frac{1}{(1 - \gamma)^3}.
\end{align*}
\end{proof}

\begin{lemma}\label{lem:stieltjes_gamma_increasing}
For any $z > 0$,
\[
\pdv{m(-z; \gamma)}{\gamma} \, \revised{ > } \, 0.
\]
\end{lemma}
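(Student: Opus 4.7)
The plan is to prove the claim by a short implicit-differentiation argument starting from the defining quadratic equation for $m$. First I would observe that, from the explicit formula, $m = m(-z;\gamma)$ satisfies
\[
\gamma z\, m^2 + (1-\gamma+z)\, m - 1 = 0,
\]
which is obtained by isolating the square root in the definition of $m(-z;\gamma)$ and squaring. In fact this rearrangement gives the useful identity
\[
2\gamma z\, m + (1-\gamma+z) = \sqrt{(1-\gamma+z)^2 + 4\gamma z},
\]
which will be used below.

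Next I would differentiate the quadratic relation implicitly with respect to $\gamma$ (keeping $z$ fixed), obtaining
\[
\bigl[2\gamma z\, m + (1-\gamma+z)\bigr]\,\pdv{m}{\gamma} \;=\; m - z m^2,
\]
so that
\[
\pdv{m(-z;\gamma)}{\gamma} \;=\; \frac{m(1 - z m)}{2\gamma z\, m + (1-\gamma+z)}.
\]

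Finally I would check positivity of both factors. The denominator equals $\sqrt{(1-\gamma+z)^2+4\gamma z} > 0$ by the identity above, so this disposes of the delicate case $\gamma > 1$ in which $1-\gamma+z$ may be negative. For the numerator: since $\sqrt{(1-\gamma+z)^2+4\gamma z} > |1-\gamma+z|$ whenever $\gamma z > 0$, the explicit formula yields $m>0$. The remaining inequality $zm < 1$, after substituting $zm = \bigl(\sqrt{(1-\gamma+z)^2+4\gamma z}-(1-\gamma+z)\bigr)/(2\gamma)$, is equivalent to $\sqrt{(1-\gamma+z)^2+4\gamma z}<1+\gamma+z$; squaring both (positive) sides reduces this to $(1+\gamma+z)^2-(1-\gamma+z)^2-4\gamma z = 4\gamma > 0$. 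Hence the numerator is strictly positive, and $\partial_\gamma m(-z;\gamma) > 0$.

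There is no genuine obstacle here: the proof is a routine calculation once the defining quadratic is in hand, and the only point requiring a little care is handling the denominator uniformly in $\gamma$, which is settled by the square-root identity rather than by casework on $\gamma \lessgtr 1$.
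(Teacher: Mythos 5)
Your proof is correct, and it takes a genuinely different route from the paper's. The paper first rationalizes the explicit formula to write $m(-z;\gamma) = 2/\bigl[(1-\gamma+z)+\sqrt{(1-\gamma+z)^2+4\gamma z}\,\bigr]$ and then shows the denominator is strictly decreasing in $\gamma$ by differentiating it directly and squaring the resulting inequality. You instead exploit the defining quadratic $\gamma z m^2 + (1-\gamma+z)m - 1 = 0$ and differentiate implicitly, which yields the clean closed form
\[
\partial_\gamma m(-z;\gamma) \;=\; \frac{m\,(1-zm)}{\sqrt{(1-\gamma+z)^2+4\gamma z}},
\]
reducing the claim to the two sign checks $m>0$ and $zm<1$, both of which you verify algebraically (and which also follow immediately from the integral representation $m = \int (s+z)^{-1}\,\dd{\MP_\gamma(s)}$, since the integrand is positive and $z/(s+z)<1$ off the atom at zero). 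Your use of the identity $2\gamma z m + (1-\gamma+z) = \sqrt{(1-\gamma+z)^2+4\gamma z}$ neatly sidesteps any casework on the sign of $1-\gamma+z$, playing the same role that rationalization plays in the paper's argument. What your route buys is an interpretable expression for the derivative in terms of $m$ itself; what the paper's buys is avoiding implicit differentiation altogether (differentiability of $m$ in $\gamma$ is of course immediate from the explicit formula, so this is not a gap in your argument, just a point worth stating). Both proofs are equally elementary and of comparable length.
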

\begin{proof}
We can rewrite the Stieltjes transform as follows
\begin{align*}
    m(-z; \gamma) &= \frac{-(1 - \gamma + z) + \sqrt{(1 - \gamma + z)^2 + 4 \gamma z}}{2 \gamma z}\\
    &= \frac{-(1 - \gamma + z) + \sqrt{(1 - \gamma + z)^2 + 4 \gamma z}}{2 \gamma z} \frac{(1 - \gamma + z) + \sqrt{(1 - \gamma + z)^2 + 4 \gamma z}}{(1 - \gamma + z) + \sqrt{(1 - \gamma + z)^2 + 4 \gamma z}}\\
    &= \frac{2}{(1 - \gamma + z) + \sqrt{(1 - \gamma + z)^2 + 4 \gamma z}}.
\end{align*}
Therefore it suffices to show that
\[
f(\gamma) = (1 - \gamma + z) + \sqrt{(1 - \gamma + z)^2 + 4 \gamma z}
\]
is decreasing in $\gamma$. Taking the derivative we see that
\[
f'(\gamma) = -1 + \frac{-(1 - \gamma + z) + 2z}{ \sqrt{(1 - \gamma + z)^2 + 4 \gamma z}}.
\]
Note that $f'(\gamma) \, \revised{ < } \, 0$ if and only if
\[
2z - (1 - \gamma + z) \, \revised{ < } \, \sqrt{(1 - \gamma + z)^2 + 4 \gamma z}.
\]
Squaring both sides and clearing terms this is equivalent to
\[
4z^2 - 4z(1 - \gamma + z)  \, \revised{ < } \, 4 \gamma z.
\]
Using the fact that $z > 0$ we can divide both sides by $4z$ to get
\[
z - (1 - \gamma + z) \, \revised{ < } \, \gamma
\]
which is clearly true.
\end{proof}
\newpage
\section{Auxiliary Lemmas}
\begin{lemma}[AM-HM inequality]\label{lem:AM-HM}
Let $x_1, \ldots, x_n \in \R$. Then
\[
\sum\limits_{i=1}^n \frac{1}{x_i} \geq n^2\qty(\sum\limits_{i=1}^n x_i)^{-1}.
\]
\end{lemma}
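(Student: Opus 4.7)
The plan is to prove this via a short application of the Cauchy--Schwarz inequality (assuming, as is standard for AM--HM, that $x_1, \ldots, x_n > 0$, which is the regime in which the statement is used in the main text; otherwise the inequality is false as written, e.g., $x_i = -1$).

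First I would write $n = \sum_{i=1}^n 1 = \sum_{i=1}^n \sqrt{x_i} \cdot \frac{1}{\sqrt{x_i}}$ and apply Cauchy--Schwarz to the two vectors $(\sqrt{x_i})_{i=1}^n$ and $(1/\sqrt{x_i})_{i=1}^n$. This yields
\[
n^2 = \qty(\sum_{i=1}^n \sqrt{x_i} \cdot \tfrac{1}{\sqrt{x_i}})^2 \;\leq\; \qty(\sum_{i=1}^n x_i)\qty(\sum_{i=1}^n \tfrac{1}{x_i}).
\]
Dividing both sides by $\sum_{i=1}^n x_i > 0$ then gives the desired inequality
\[
\sum_{i=1}^n \frac{1}{x_i} \;\geq\; n^2\qty(\sum_{i=1}^n x_i)^{-1}.
\]

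An equivalent one-line route is Jensen's inequality applied to the convex function $f(t) = 1/t$ on $(0,\infty)$: $\frac{1}{n}\sum_{i=1}^n \frac{1}{x_i} \geq \frac{1}{\frac{1}{n}\sum_{i=1}^n x_i}$, which rearranges to the same bound. There is no real obstacle here; the only subtlety worth flagging is the implicit positivity assumption on the $x_i$, which is satisfied in every invocation of the lemma in the body of the paper (where the $x_i$ are positive eigenvalues).
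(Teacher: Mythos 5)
Your proof is correct and follows essentially the same route as the paper's: writing $n = \sum_i \sqrt{x_i}\cdot(1/\sqrt{x_i})$ and applying Cauchy--Schwarz before rearranging. Your remark about the implicit positivity of the $x_i$ is a fair point (the lemma as stated with $x_i \in \R$ is false without it), but it does not affect the argument since every invocation uses strictly positive eigenvalues.
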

\begin{proof}
By the Cauchy-Schwarz inequality
\[
n^2 = \qty(\sum\limits_{i=1}^n \sqrt{x_i} \cdot \frac{1}{\sqrt{x_i}})^2 \leq \sum\limits_{i=1}^n \sqrt{x_i}^2 \cdot \sum\limits_{i=1}^n \frac{1}{\sqrt{x_i}^2},
\]
which after re-arranging gives the desired inequality.
\end{proof}

\revised{
\begin{lemma}[Chebyshev's Sum Inequality \cite{hardy1952inequalities}]\label{lem:chebyshev-sum-inequality}
    If $a_1 \leq a_2 \leq \ldots \leq a_n$ and $b_1 \geq \ldots \geq b_n$ then
    \[
        \frac{1}{n} \sum\limits_{i=1}^n a_i b_i \leq \qty(\frac{1}{n} \sum\limits_{i=1}^n a_i)\qty(\frac{1}{n} \sum\limits_{i=1}^n b_i).
    \]
\end{lemma}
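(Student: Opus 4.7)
The plan is to prove Chebyshev's Sum Inequality via the classical ``pair-difference'' trick, which exploits the opposite ordering of the two sequences to produce a non-positive double sum whose expansion directly yields the claim.

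More concretely, first I would observe that because $(a_i)$ is non-decreasing while $(b_i)$ is non-increasing, for every pair of indices $i, j$ the differences $(a_i - a_j)$ and $(b_i - b_j)$ have opposite signs (or at least one is zero). Hence
\[
(a_i - a_j)(b_i - b_j) \le 0 \quad \text{for all } i,j \in \{1,\dots,n\}.
\]
Summing over all $i, j$ therefore gives
\[
\sum_{i=1}^{n} \sum_{j=1}^{n} (a_i - a_j)(b_i - b_j) \;\le\; 0.
\]

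Next I would expand the double sum by distributing the product, obtaining four terms: $\sum_{i,j} a_i b_i$, $-\sum_{i,j} a_i b_j$, $-\sum_{i,j} a_j b_i$, and $\sum_{i,j} a_j b_j$. The first and fourth each collapse to $n \sum_{i=1}^{n} a_i b_i$ (by summing out the free index), while the second and third each collapse to $\big(\sum_i a_i\big)\big(\sum_j b_j\big)$. Assembling these yields
\[
2n \sum_{i=1}^{n} a_i b_i \;-\; 2 \Bigl(\sum_{i=1}^{n} a_i\Bigr)\Bigl(\sum_{i=1}^{n} b_i\Bigr) \;\le\; 0,
\]
and dividing both sides by $2 n^2$ gives exactly
\[
\frac{1}{n}\sum_{i=1}^{n} a_i b_i \;\le\; \Bigl(\tfrac{1}{n}\sum_{i=1}^{n} a_i\Bigr)\Bigl(\tfrac{1}{n}\sum_{i=1}^{n} b_i\Bigr).
\]

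There is really no main obstacle here; the only thing to be slightly careful about is the sign check in the first step (verifying that opposite monotonicity of the two sequences forces $(a_i-a_j)(b_i-b_j)\le 0$ for every $i,j$, including when $i=j$ where the product vanishes). The rest is bookkeeping in the expansion of the double sum. An alternative route would be induction on $n$ (peel off the largest $a_n$ and smallest $b_n$), but the symmetric-pair argument above is cleaner and avoids any case analysis.
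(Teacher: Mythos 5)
Your proof is correct. Note that the paper itself does not prove this lemma at all---it simply cites the classical reference of Hardy, Littlewood, and P\'olya---so there is no in-paper argument to compare against. The pair-difference (rearrangement-style) argument you give is the standard proof: the sign observation $(a_i - a_j)(b_i - b_j) \le 0$ for all $i,j$ is exactly where the opposite monotonicity enters, and your expansion of the double sum, with the two ``diagonal'' terms each collapsing to $n\sum_i a_i b_i$ and the two cross terms each collapsing to $\bigl(\sum_i a_i\bigr)\bigl(\sum_i b_i\bigr)$, is accurate; dividing by $2n^2$ yields the stated inequality. The argument is complete and self-contained, which is arguably a small improvement over the paper's bare citation, since this lemma is the crux of showing that the function $g$ in the proof of Theorem~\ref{thm:general_lower_bound} is decreasing.
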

}

\begin{lemma}[Concentration of Quadratic Forms, Lemma 7.6 from \cite{dobriban2018high}]\label{prop:convergence_quad_form}
For each positive integer $n$, let $\bA_n$ be a random $n \times n$ PSD matrix. Let $\xi_1, \xi_2, \ldots$ be a sequence of i.i.d random variables and denote $\bxi_n = (\xi_1, \ldots, \xi_n)$. Then if
\begin{itemize}
    \item $\sup_n \norm{\bA_n} = O(1)$
    \item $\E[\xi_1] = 0$, $\E[\xi_1^2] = \sigma^2$, and $\E[\xi_1^{4 + \eta}]$ for some $\eta > 0$
\end{itemize}
we have the following convergence
\[
\frac{1}{n} \bxi_n^\sT \bA_n \bxi_n - \frac{\sigma^2}{n} \Tr(\bA_n) \to 0
\]
almost surely.
\end{lemma}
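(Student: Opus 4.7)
The plan is to reduce the statement to a conditional moment estimate by conditioning on $\bA_n$. In the intended use (and as implicit in Dobriban--Wager), $\bxi_n$ is independent of $\bA_n$, so once we condition on $\bA_n$ the problem becomes a classical quadratic-form concentration for i.i.d.\ variables against a deterministic PSD matrix. Write $Q_n = \bxi_n^\sT \bA_n \bxi_n$. Iterated expectation over $\bxi_n \mid \bA_n$ immediately yields the centering $\E[Q_n \mid \bA_n] = \sigma^2 \Tr(\bA_n)$, since $\E[\xi_i \xi_j] = \sigma^2 \delta_{ij}$. So the task is really to show that the fluctuation $Q_n - \sigma^2 \Tr(\bA_n)$ is $o(n)$ almost surely.

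The natural tool is a moment bound of Bai--Silverstein type: for any $p \geq 2$ and a deterministic PSD matrix $\bA$,
\[
\E\bigl|\bxi^\sT \bA \bxi - \sigma^2 \Tr(\bA)\bigr|^p \leq C_p\bigl[(\E \xi_1^4)^{p/2}\Tr(\bA^2)^{p/2} + \E|\xi_1|^{2p}\,\Tr(\bA^p)\bigr].
\]
The key move is to take the non-integer exponent $p = 2 + \eta/2$, which is permissible since the hypothesis $\E|\xi_1|^{4+\eta} < \infty$ is exactly what is needed to bound both $\E \xi_1^4$ and $\E|\xi_1|^{2p} = \E|\xi_1|^{4+\eta}$. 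Using $\sup_n \|\bA_n\|_{\mathrm{op}} = O(1)$ and $\bA_n$ is $n \times n$ and PSD, we have $\Tr(\bA_n^2) \leq n\|\bA_n\|_{\mathrm{op}}^2$ and $\Tr(\bA_n^p) \leq n\|\bA_n\|_{\mathrm{op}}^p$, so the conditional moment bound becomes
\[
\E\bigl[|Q_n - \sigma^2\Tr(\bA_n)|^p \,\big|\, \bA_n\bigr] \leq C \bigl(n^{p/2} + n\bigr) \leq C' n^{p/2}
\]
almost surely, since $p/2 > 1$.

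Taking unconditional expectation and applying Markov's inequality at threshold $n\varepsilon$ then gives, for every $\varepsilon > 0$,
\[
\Pr\!\left(\bigl|n^{-1}Q_n - n^{-1}\sigma^2\Tr(\bA_n)\bigr| > \varepsilon\right) \leq \frac{C' n^{p/2}}{(n\varepsilon)^p} = \frac{C'}{\varepsilon^p\, n^{p/2}},
\]
and since $p/2 = 1 + \eta/4 > 1$, the right-hand side is summable in $n$. The Borel--Cantelli lemma then yields almost-sure convergence.

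The main technical obstacle is establishing the non-integer moment bound $p = 2 + \eta/2$: the classical Bai--Silverstein lemma is usually quoted for even integers and requires corresponding even moments, so taking $p = 2$ only gives an $O(1/n)$ probability bound (enough for convergence in probability via Chebyshev but not for a.s.\ convergence via Borel--Cantelli). The resolution is either to invoke the extension of the Bai--Silverstein bound to all real $p \geq 2$ (which follows from a martingale-difference decomposition of $Q_n$ into its diagonal part $\sum_i A_{ii}(\xi_i^2 - \sigma^2)$ and off-diagonal part $\sum_{i \neq j} A_{ij}\xi_i \xi_j$, and Burkholder--Rosenthal), or to interpolate between $p = 2$ and a higher integer by H\"older and a truncation argument that exploits the $(4+\eta)$-moment assumption. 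Either route produces exactly the bound needed, and the Borel--Cantelli step then closes the proof.
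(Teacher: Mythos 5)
The paper does not prove this lemma --- it is imported verbatim from Dobriban--Wager, whose own proof is exactly the argument you give: condition on $\bA_n$, apply the Bai--Silverstein moment inequality for quadratic forms with the fractional exponent $p = 2 + \eta/2$ (which is precisely what the $(4+\eta)$-moment hypothesis licenses), bound the traces by $n\norm{\bA_n}^p = O(n)$, and close with Markov plus Borel--Cantelli using $p/2 = 1 + \eta/4 > 1$. Your proposal is correct and coincides with the standard proof; your side remarks (that independence of $\bxi_n$ from $\bA_n$ is implicit in the statement, and that $p=2$ alone would only give convergence in probability) are both accurate.
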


\end{document}